\newtheorem{teo}{Theorem}[section]      
\newtheorem{rem}{Remark}
\begin{document}

\title{LGN-CNN: a biologically inspired CNN architecture
}


\author{Federico Bertoni         \and
        Giovanna Citti \and Alessandro Sarti 
}



\institute{$^*$This project has received funding from the European Union's Horizon 2020 research and innovation program under the Marie Sk\l odowska-Curie grant agreement No 754362. \includegraphics[height=1cm]{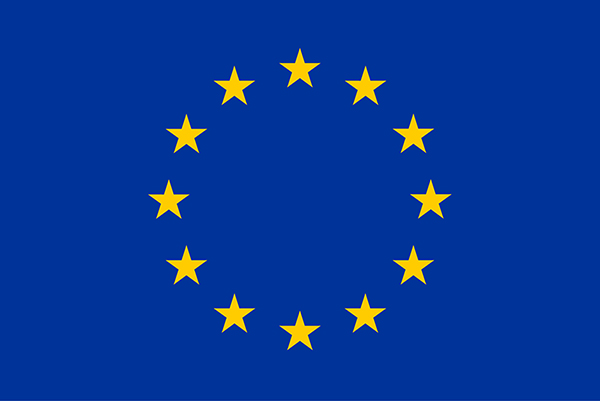} \\
$^{**}$This project has received funding from GHAIA - Marie Sk\l odowska-Curie grant agreement No 777822.
\\
\hrule \vspace{2mm}
Federico Bertoni$^*$ \at
              Sorbonne Universit\'e, Paris, France \\
              \email{federico.bertoni4@unibo.it}           
           \and
           Giovanna Citti$^{**}$ \at
              Dipartimento di Matematica, Universit\`{a} di Bologna, Italy. \\
              \email{giovanna.citti@unibo.it}  
            \and
           Alessandro Sarti$^{**}$ \at
              CAMS, CNRS - EHESS, Paris, France.  \\
              \email{alessandro.sarti@ehess.fr}  \\
}

\date{}

\maketitle

\section*{Abstract} \label{Abstract}            
\lhead[\fancyplain{}{\bfseries\thepage}]{\fancyplain{}{\bfseries\rightmark}}

In this paper we introduce a biologically inspired Convolutional Neural Network (CNN) architecture called LGN-CNN that has a first convolutional layer composed by a single filter that mimics the role of the Lateral Geniculate Nucleus (LGN). The first layer of the neural network shows a rotational symmetric pattern justified by the structure of the net itself that turns up to be an approximation of a Laplacian of Gaussian (LoG). The latter function is in turn a good approximation of the receptive field profiles (RFPs) of the cells in the LGN. The analogy with  the visual system  is established, emerging directly from the architecture of the neural network. A proof of rotation invariance of the first layer is given on a fixed LGN-CNN architecture and the computational results are shown. Thus, contrast invariance capability of the LGN-CNN is investigated and a comparison between the Retinex effects of the first layer of LGN-CNN and the Retinex effects of a LoG is provided on different images. A statistical study is done on the filters of the second convolutional layer with respect to biological data.
In conclusion, the model we have introduced approximates well the RFPs of both LGN and V1 attaining similar behavior as regards long range connections of LGN cells that show Retinex effects.


\keywords{CNN \and  LGN \and  Visual system \and  Retinex theory \and  Minimal functional symmetry properties}

\section{Introduction} \label{Intro}            
\lhead[\fancyplain{}{\bfseries\thepage}]{\fancyplain{}{\bfseries\rightmark}}
\pagenumbering{arabic}  
\subsection{Architecture of the visual system and CNNs}

 The visual system is composed by many cortices that elaborate the visual signal received from the retina via the optical nerve. 
Each cortex receives information from other cortices, processes it through horizontal connectivity, forward it to higher areas and send feedback to previous ones. 
The structure is very complex and not totally ordered as physiologically described for example in \cite{Hubel}. Geometrical model of the first visual cortex, we refer to \cite{Ferraro}, \cite{Petitot}, \cite{CittiSarti2006}.

The first neural nets have been inspired by a simplification of this structure,  and present a hierarchical structure, where each layer receives input from the previous one and provides output to the next one. Despite this simplification, they reached optimal performances in processes typical of the natural visual system,  as for example object-detection \cite{Redmon}, \cite{Ren} or image classification \cite{He}, \cite{Simonyan}.

More recently relations between CNNs and human visual system have been widely studied, with the ultimate scope of making the CNN even more efficient in specific tasks. 
A  model of the first cortical layers described as layers of a CNN has been studied in \cite{Poggio2}.
In \cite{Yamins} and in \cite{Yamins2} the authors were able to study  higher areas by focusing on the encoding and decoding ability of the visual system. 
Recurrent Neural networks have been introduced to implement 
the horizontal connectivity (as for example in \cite{Sherstinsky}), 
or feedback terms (for example in \cite{Liang}). 
A modification of these nets, more geometric and more similar to the 
structure of the brain, have been recently proposed in \cite{Montobbio}.

It is well known that both V1 RFPs and the first convolutional layer of a CNN are 
 mainly composed by Gabor filters. 
We refer to \cite{Daugman}, \cite{Jones}, \cite{Lee}, \cite{Petitot} for the visual system 
and  to \cite{Yamins}, \cite{Yamins2}, \cite{Poggio1} for
properties of CNNs.

Biological based models of V1 in terms of Gabor filters have been made in \cite{Zhang} and \cite{Poggio2} and
the statistic of the RFPs of a macaque's V1 was studied in \cite{Ringach}, but a comparison between these results and the statistics of learned filters is still missing.

 


\subsection{Invariance properties in  CNNs}

Gabor invariance properties are mainly invariance with respect to translation and rotations. CNNs are translation equivariant since they are defined in terms of convolutional
kernels (see \cite{Cohen2016}, \cite{Cohen2018}). Rotation invariance properties can be imposed either obtaining the whole bank of filters from a learned one \cite{Marcos} and \cite{Wu}, or rotating any test image  \cite{Fasel}, \cite{Dieleman}, \cite{DielemanFauw}. A different kind of pooling or kernel procedure are used in \cite{Laptev} and \cite{Gens}, while \cite{Barnard} studied invariances with respect to other feature spaces.

\subsection{LGN, Retinex and contrast perception}

In the human visual system, the process in V1 operated by Gabor filters, is preceded by a preprocessing operated by radially symmetric families of cells, both in the retina and in the LGN (see \cite{Hubel}). 
The RFPs of cells can be approximated by a LoG which is rotational symmetric (for a review see for example \cite{Petitot}).  It does not seem that an analogous layer is present in classical CNN, but it is known that it is crucial for human contrast perception. It has been investigated in Retinex theory formulated in 1964 by E. H. Land in \cite{Land1} and further developed by E. H. Land and J. McCann in \cite{Land2}.  Several developments are due to \cite{Brainard}, \cite{Provenzi} and \cite{Lei} among others. 
An analysis of retinal and cortical components of the Retinex theory has been performed by several research teams in the past years (for a review see e.g. \cite{Yeonan}, \cite{Valberg}) and  it has been shown that the Magnocellular cells are the ones involved in contrast perception (see e.g. \cite{Enroth}, $\;\;\;\;\;\;\;\;\;$
\cite{Solomon}).
 Variational approaches have been proposed by \cite{Kimmel}, \cite{Morel1} and \cite{Morel2}. A geometrical model which makes a first step in relating the architecture of the visual system and invariance of RFPs has been presented in 
\cite{Gauge}. The action of radially symmetric RFP is interpreted as a LoG, while the horizontal connectivity is modeled as an inverse of the Laplacian operator, and allows to recover the given image up to a contrast shift.

\subsection{Our contribution}

In the present paper, we introduce a new neural architecture inspired by the structure of visual cortices
and study the properties of the filters of the first and second layers.
The first layer contains a single filter and models the LGN.  
We show that it has the same radially symmetric shape, as LGN receptive profiles and is able to reproduce a Retinex effect. 
Then we show that the statistics of filters of the second layer much better fits the observed distributions, comparing with the esperimental results of \cite{Ringach}.
The paper is organized as following.

In Section \ref{subs:VisualSystem} we recall  the structure of LGN and V1 and the RFPs of their cells. We described an interpretation of the Retinex model given by Morel \cite{Morel1} and a statistical study on the RFPs of the distributions of V1 cells in a macaque by Ringach \cite{Ringach}.


In Section \ref{First} we introduce our  LGN-CNN architecture and describe it in detail. 
The new architecture is implemented in Section \ref{Second}. It provides classification performances comparable to the classical one, but it enforces the development of rotation symmetric filter during the training phase. 
As a result, the filter  is  a good approximation of RFP of 
LGN. 

In Section \ref{subs:Gen_appr_Retinex} we test our filter on contrast perception phenomena. Indeed, we use the  learned kernel to repeat and justify the Retinex model.  
We test the model on a number of classical Retinex illusion, comparing with the results of \cite{Morel1}.

Thus,  in Section \ref{subs:RingachStudy} we show that  the filters in the second layer of the net mimics the shape of Gabor filters of V1 RFPs. We have already recalled that this happens also in  standard CNN. We compare the statistics of the filters with the results obtained on the RFPs of a macaque's V1 from the work of Ringach \cite{Ringach}. By construction both in our net and in the visual system the second layer do not exhibit LoG, which are collected in the first layer. Also, other statistical properties of the filters are present in our LGN-CNN. 

The proof of the symmetry of the filter of the first layer is collected in appendix  \ref{Third}.
It is provided in the simplified case of a neural network composed by only a layer composed by a single filter.

\section{The visual system} \label{subs:VisualSystem}           
\lhead[\fancyplain{}{\bfseries\thepage}]{\fancyplain{}{\bfseries\rightmark}}

The visual system is one of the most studied part of the brain.
We describe here the aspects important for our study and refer the reader for a more general description to \cite{Nolte}, \cite{Jessell}.

The retina is a light-sensitive layer of tissue which receives the visual stimulus and translates it into electrical impulses. These impulses first reach the LGN  whose cells preprocess the visual stimulus. Then the impulse is processed by the cells of V1, whose output is taken in input to all the other layers of the visual system. 

%

We are mainly interested in the cells of the LGN and in the simple cells of V1. Each cell receives the electrical impulse from a  portion of the retina $\Omega$ called receptive field (RF). The RF of each cell is divided in excitatory and inhibitory areas which are activated by the light and that can be modeled as a function $\Psi:\Omega \subset \R^2 \to \R$ called receptive field profile (RFP). Thus, if the excitatory areas are activated the firing rate of the cell increases whereas it decreases in case of inhibitory areas activation. Figure \ref{fig:LGN} shows the RFP of an LGN cell that can be modeled by a LoG. 

\begin{figure}
\centering
\includegraphics[height=3.4cm]{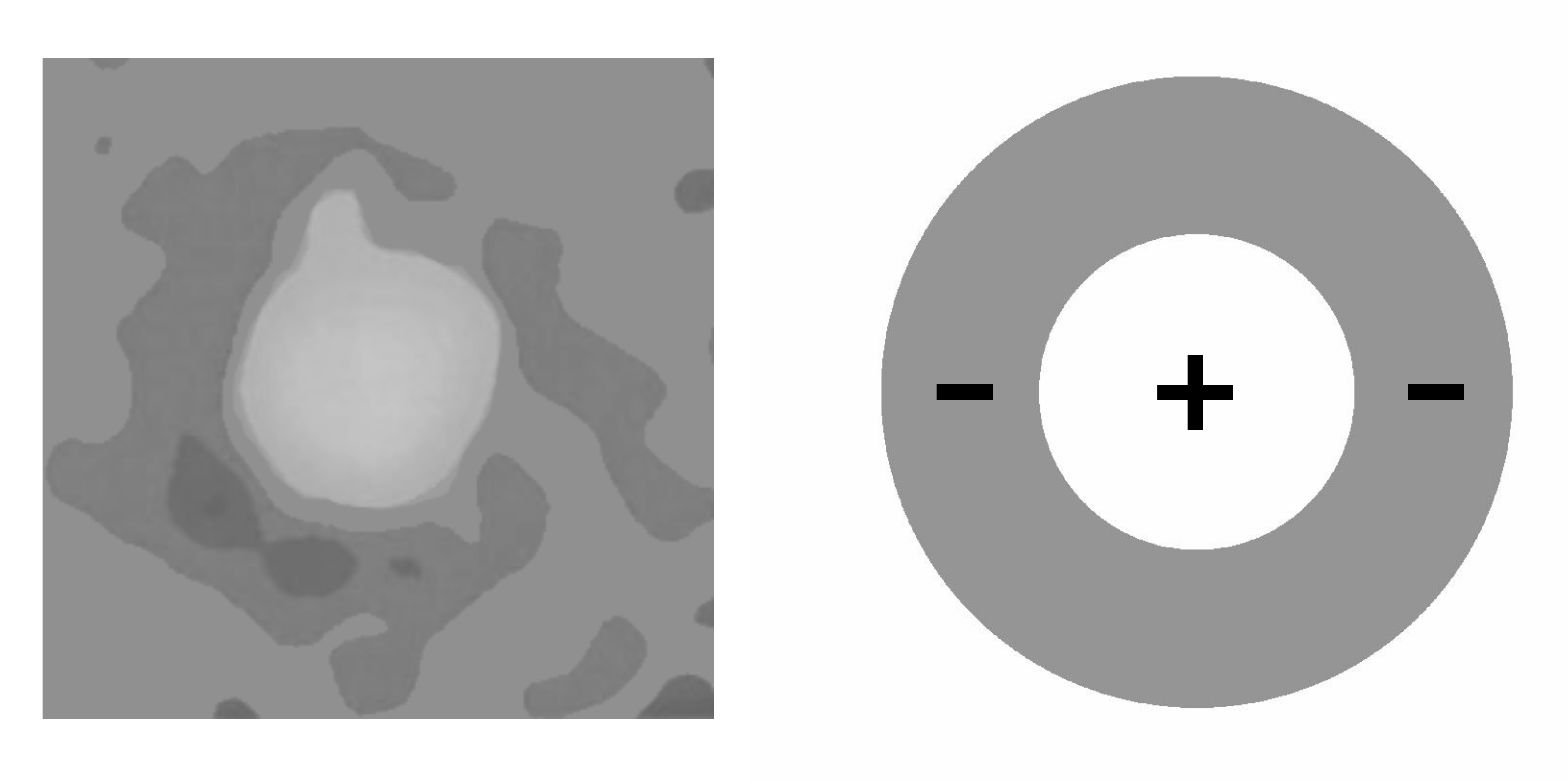}
\caption{On the left: RFP of an LGN cell where the excitatory area is in white and the inhibitory one is in gray. On the right: Its approximation by a LoG. \emph{From}: \cite{DeAngelis}. }
\label{fig:LGN}
\end{figure}

\begin{figure}
\centering
\includegraphics[height=3.8cm]{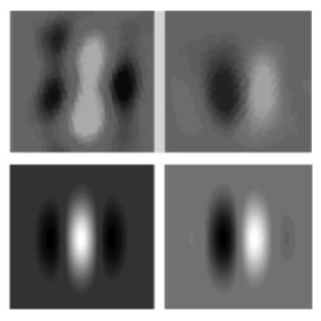}
\caption{First row: RFPs of two simple cells of V1 where the excitatory area is in white and the inhibitory one is in black. Second row: their approximations by Gabor functions. \emph{From}: \cite{SartiCitti}. }
\label{fig:Gabor}
\end{figure}

\subsection{LGN RFP: neural interpretation of Retinex model} \label{sec:Morel}            
\lhead[\fancyplain{}{\bfseries\thepage}]{\fancyplain{}{\bfseries\rightmark}}

The Retinex algorithm, introduced in \cite{Land2}, mimics the contrast invariant process performed by our visual system and associates to an image $I : \Xi \subset\R^2 \to \R$ the perceived image $\widetilde{I}$.

In \cite{Morel1} and \cite{Morel2} the authors have formalized the Retinex theory 
as the solution of the following discrete Poisson PDE 
\begin{equation}
\label{eq:PDE_Morel}
\;\;\;\;\; \;\;\;\;\; \;\;\;\;\; \;\;\;\;\; - \Delta_d \widetilde{I} = M(I)
\end{equation}


where $\Delta_d$ is the classical discrete Laplacian and $M$ is a modified version of the discrete Laplacian.

In \cite{Gauge} a neural interpretation of the model has been introduced. The RFP of the LGN cell which takes in input the visual signal acting by convolution on it,
modeled as a modified discrete LoG, $M(G_\sigma) \approx \Delta G_\sigma$,
where $G_\sigma$ is a Gaussian bell
Hence the action on an input image is the following: 
$$Out_{LGN}(I) = \Delta (G_\sigma * I)  \approx \Delta I$$
The horizontal connectivity in this layer is radially symmetric and modeled as 
the fundamental solution $\log ( \sqrt{x^2 + y^2})$ whose associated operator is the inverse of the Laplacian $\Delta^{-1}$ and  allows to recover the function $\widetilde{I}$:
$$\widetilde{I} = {\Delta^{-1}}* Out_{LGN}(I).$$
As a result 
$$\Delta \widetilde{I} =  Out_{LGN}(I) \approx \Delta I, $$
is the Retinex equation.  In general, $\widetilde{I}$ will not coincide with $I$, but 
will differ by a harmonic function. 

Our aim is to replace the action of the RFP with the filter learned by the LGN-CNN.
If it is a good approximation of the associated $\Delta G_\sigma$, then its inverse will allow to recover the perceived image  $\widetilde{I}$ in problems of contrast perception. 
In Section \ref{subs:Gen_appr_Retinex} we will describe in detail the process.

\subsection{Statistics of V1 RFPs} 
As first discovered by Daugmann, RFPs of the primary cortex V1 can be approximated by Gabor functions defined as follows:
\begin{equation}\label{eq:Gabor}
\begin{split}
 h(x',y') = & A e^{(-(x'/\sqrt{2} \sigma_x)^2 - (y'/\sqrt{2} \sigma_y)^2 )} \\ &  \cos (2 \pi f x' + \phi)
\end{split}
\end{equation}
where $(x',y')$ is translated and rotated from the original coordinate system $(x_0 , y_0)$
\begin{equation*}
\begin{split}
 x'= & (x-x_0) \cos \theta + (y- y_0) \sin \theta \\ y' = & - (x- x_0) \sin \theta + (y-y_0) \cos \theta.
\end{split}
\end{equation*}
as shown in Figure \ref{fig:Gabor}. 


Recently, Ringach in \cite{Ringach} has proved that RFPs are not uniformly distributed with respect to all the Gabor parameters, but they have a very particular statistic. 
Ringach defines two coefficients $n_x$ and $n_y$ which estimate the elongation in $x$ and $y$ directions respectively
\begin{equation*}
    (n_x, n_y) = (\sigma_x \cdot f, \sigma_y \cdot f).
\end{equation*}
In particular, if $f=0$ the function $h$ in (\ref{eq:Gabor}) simplifies to a Gaussian since the cosine becomes a constant. Otherwise it is elongated:
\begin{itemize}
\item Fitting a Gabor function defined in equation (\ref{eq:Gabor}) to the RFPs;
\item Comparing the results on $(n_x, n_y) = (\sigma_x \cdot f, \sigma_y \cdot f)$ plane.
\end{itemize}

Figure \ref{fig:stat_distrC} shows the statistical distribution of RFPs of V1 cells in monkeys in $(n_x, n_y)$ plane obtained by Ringach in \cite{Ringach}. In \cite{Barbieri} the authors have studied the same statistical distribution with respect to the Uncertainty Principle associated to the task of detection of position and orientation.


\section{Introducing LGN-CNN architecture} \label{First}            
\lhead[\fancyplain{}{\bfseries\thepage}]{\fancyplain{}{\bfseries\rightmark}}

In this section we introduce one of the main novelty of this paper, a CNN architecture inspired by the structure of the visual system and, in particular, takes into account LGN cells.
The retinal action in a CNN has been implemented in \cite{Lindsey}, where the authors have proposed a bottleneck model for the retinal output. In our model we propose a single filter layer at the beginning of the CNN that should mimic the action of the LGN.
As we have already discussed in Section \ref{subs:VisualSystem} the RFP of an LGN cell can be modeled by a LoG that acts directly on the visual stimulus. 
Since the LGN preprocesses the visual stimulus before it reaches V1, we should add a first layer at the beginning of the CNN that reproduces the role of the LGN. 

 In particular, if we consider a classical CNN we can add before the other convolutional layers, a layer $\ell^0$ composed by only one filter $\Psi^0$ of size $s^0 \times s^0$ and a ReLU function. 
Note that after the first layer $\ell^0$ we will not apply any pooling. In this way taking a classical CNN and adding $\ell^0$ will not modify the structure of the neural network and  the number of parameters will only increase by $s^0 \times s^0$. Furthermore, $\Psi^0$ will prefilter the input image without modifying its dimensions; this behavior mimics the behavior of the LGN which let the neural network to be closer to the visual system structure.
Figure \ref{fig:CNN_LGN_V1} shows a scheme of the first steps of the visual pathway (i.e., LGN and V1) in parallel with the first two layers $\ell^0$ and $\ell^1$ of the LGN-CNN architecture.

\begin{figure}
\centering
\includegraphics[height=9cm]{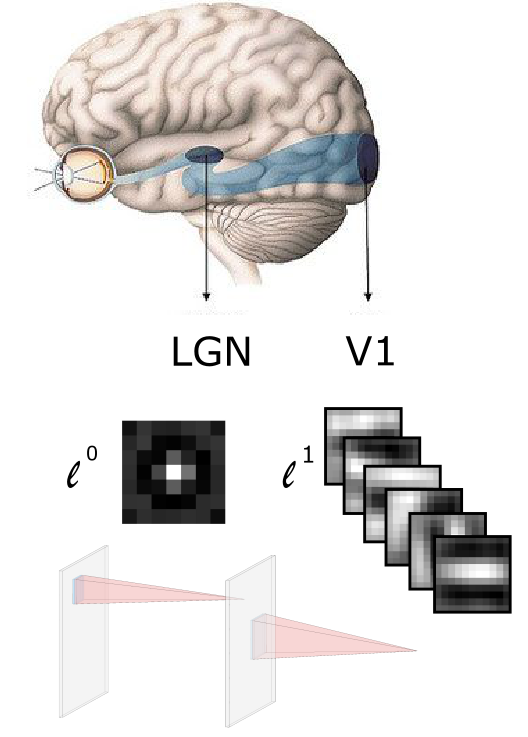}
\caption{Scheme of the LGN and V1 in parallel with the first two layers $\ell^0$ and $\ell^1$ of the LGN-CNN architecture.}
\label{fig:CNN_LGN_V1}
\end{figure}

The theoretical idea behind this structure can be found in a simple result on rotational symmetric convex functionals. 
In particular, we recall that a rotational symmetric convex functional $F$  has a unique minimum $\omega$. Since $F$ is rotational symmetric, $F(\omega \circ g) = F(\omega)$ for a rotation $g$. Thus, since the minimum is unique, $\omega = \omega \circ g$, implying the rotational symmetry of $\omega$.
There are several results on symmetries of minimum for functionals as for example in \cite{Lopes}, \cite{Gidas}. Our aim is to extend these results in the case of CNNs in particular on our architecture that we name as Lateral Geniculate Nucleus Convolutional Neural Network (LGN-CNN).


We will also show that 
the Gabor-like filters in the second convolution 
layer, reprojected in the 
$(n_x, n_y)$ plane introduced by Ringach and recalled above, 
satisfy the same 
properties of elongation 
which characterizes the RFPs of simple cells in V1. This analysis should enforce the link between our architecture and the visual system structure, at least as regards simple cells in V1.

\section{Applications of LGN-CNN} \label{Second}            
\lhead[\fancyplain{}{\bfseries\thepage}]{\fancyplain{}{\bfseries\rightmark}}

\subsection{Settings}\label{subs:Settings} 

In this Section we  describe the settings for testing our architecture. We use MatLab2019b for academic use.

We train our LGN-CNN architecture on a dataset of natural images called STL-10 (see \cite{STL10}) that contains 5000 training images divided in 10 different classes. 
We have modified the training set in the following way:
\begin{itemize}
\item Changing the images from RGB color to grayscale color using the built-in function \textit{rgb2gray} of MatLab; 
\item Applying a circular mask to each image, leaving unchanged the image in the circle and putting the external value to zero;
\item Rotating each image by 5 random angles augmenting the dataset to 25000 images; thanks to the previous step no boundary effects are present;
\item Cropping the $64 \times 64 $ centered square that does not contain the black boundaries;
\item Subtraction of the mean value in order to have zero mean input images.
\end{itemize}

Thus, after these steps we have obtained a rotation invariant training set composed by 25000 $64 \times 64$ images. We have applied the same steps to the test set but we have rotated each image to just one random angle. 
Since the images are $64 \times 64$ we have decided to use quite large filters in the first and second layer ($7 \times 7$ and $11 \times 11$ respectively) in order to obtain more information about their shapes.



\begin{figure*}
\centering
\includegraphics[height=5cm]{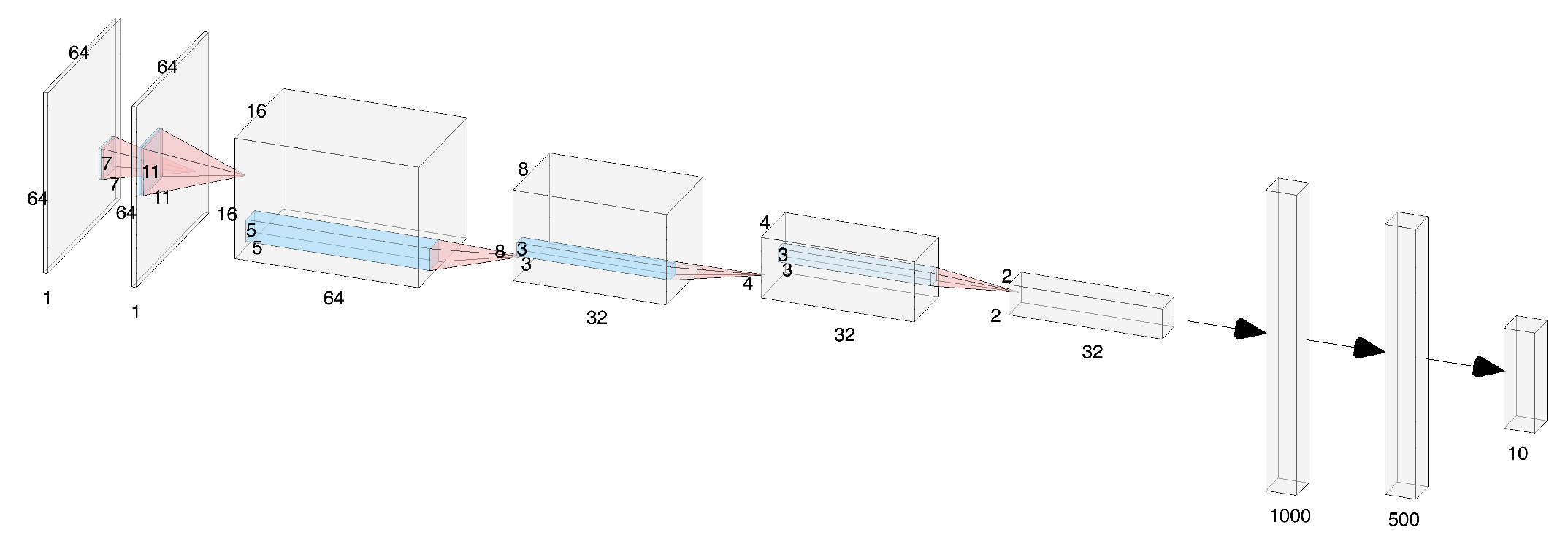}
\caption{Architecture of LGN-CNN.}
\label{fig:sflcnn}
\end{figure*}

Figure \ref{fig:sflcnn} shows the architecture of our CNN. Let us note that between each convolutional layer and its ReLU function there is a batch normalization layer $b$
with the same size of the number of filters of the corresponding convolutional layer. 
The input is an image of size $64 \times 64 \times 1$. Then there is the first convolutional layer $\ell^0$ composed by only $\Psi^0$ of size $7 \times 7$ followed by a ReLU $R$.
After $\ell^0$ the batch normalization layer performs a normalization similar to the one that the retina  performs as described in \cite{Carandini}. Indeed, one of the main difference is the subtraction of the mean value $\mu$ performed by the batch normalization layer defined as follows
$$ \widehat{x_i} = \frac{x_i - \mu}{\sqrt{\sigma^2 + \epsilon}}, $$

where $x_i$ is the element to normalize, $\mu$ is the mean value of the batch, $\sigma$ is the standard deviation of the batch and $\epsilon$ is a small value that prevents bad normalizations in case of really small standard deviations. However, since the input images have zero mean then the convolution with $\Psi^0$ have still zero mean. 
Thus, the batch normalization layer between $\ell^0$ and $\ell^1$ has similar characteristics as the biological one.
On the other hand, the two approaches differ since in the batch normalization layer the statistical parameters ($\mu$, $\sigma$) are calculated channel-wise over all input of the batch, while in the normalization described in \cite{Carandini} the $\sigma$ value is calculated from a single input instance over a restricted spatial neighborhood. However, we expect that the final result will not differ heavily applying the batch normalization layer.

Then, the second layer $\ell^1$ composed by 64 filters of size $11 \times 11$ receives as input a matrix of the same size of the image. Note that the stride is 2 and that the spatial dimensions half. After that 
we apply a ReLU and a max POOLING $p^2_m$ with squares of size $2 \times 2$. The third convolutional layer $\ell^2$ is composed by 32 filters of size $5 \times 5 \times 64$ and it is followed by a ReLU and a max POOLING. Then, we apply a convolutional layer $\ell^3$ composed by 32 filters of size $3 \times 3 \times 32$ followed by a ReLU and a max POOLING. The last convolutional layer $\ell^4$ has the same filters as $\ell^3$ followed by a ReLU and a max POOLING. 
Eventually three fully-connected ($FC$) layers of size 1000, 500 and 10 respectively are applied giving as output a vector of length 10. Finally,  
we apply a softmax $\sigma$, 
$\sigma(x)_z = \frac{e^{x_z}}{\sum_k e^{x_k} } $ where $x$ is the output of $FC^3$,
in order to obtain a probability distribution over the 10 classes. The functional that models this neural network is the following 
\begin{equation}\label{eq:functional}
\begin{split}
F (I) := & (  \sigma \circ FC^{3} \circ FC^{2} \circ FC^{1} \circ \\ & p^{2}_{m} \circ R \circ b \circ \ell^{4} \circ p^{2}_{m} \circ R \circ \\ & b \circ \ell^{3} \circ 
p^{2}_{m} \circ R  \circ b \circ \ell^{2} \circ \\ & p^{2}_{m} \circ R \circ  b \circ \ell^{1} \circ R \circ b \circ \ell^{0} ) (I)
\end{split}
\end{equation}

A cross-entropy loss for softmax function defined as in equation (\ref{eq:loss}) is applied to the functional (\ref{eq:functional}) where $\widetilde{z}$ is the label selected by the neural network and $y(I)$ is the true label.
\begin{equation}\label{eq:loss}
\begin{split}
L (F(I), y(I)) =  & \log ( \sum_z e^{(F_z (I) - F_{\widetilde{z}} (I) )} ) + \\ & F_{\widetilde{z}} (I) - F_{y(I)}
\end{split}
\end{equation}


We have trained the neural network for 30 epochs with an initial learning rate of 0.01, a learning rate drop factor of 0.97 and a piecewise learning rate schedule with a learning rate drop period of 1. The mini batch size is 128 with an every-epoch shuffle, the L2 regularization term is 0.02 and the momentum is 0.9.

In Table \ref{Performance_tab} there are summarized the mean performances over 10 different trainings of three CNN architectures with the LGN layer $\ell^0$ plus several convolutional layers (4, 8 and 12 respectively). As expected,  the performance increases when the architecture is deeper. We  stress out that the results obtained on the LGN layer and the first convolutional layer are not affected from the rest of the neural network. Indeed, it is possible to add more layers and build more deeper architectures to obtain better results on the classification task without losing the properties of the first layers of the LGN-CNN. Since our focus was to analyze the structures that arises in the first two layers, we have not further investigated the performance of the CNN.

\begin{table}[]
\centering
\begin{tabular}{|c| c | c |} 
 \hline
 LGN + 4CL & LGN + 8CL &  LGN + 12CL  \\ 
 \hline
 70.41\% & 72.73 \%  & 73.61 \%  \\ 
 \hline

\end{tabular}
\caption{Mean performances of several architectures over 10 different trainings with LGN layer plus 4, 8 and 12 other convolutional layers (CL).  }
\label{Performance_tab}
\end{table}



\subsection{The first layer of LGN-CNN} \label{subs:First_layer}

After the training phase we analyze the neural network focusing on the first layer in this Section. Figure \ref{fig:comparison_filter} shows the filter $\Psi^0$ and \ref{fig:comparison_filterB} shows its approximation with minus the LoG.
The two have a high correlation of 95.21\% computed using the built-in function of MatLab \textit{corr2}.

 \begin{figure*}
\centering
\subfloat[][\emph{Filter $\Psi^0$ of first layer of LGN-CNN}.]
{\includegraphics[height=5cm]{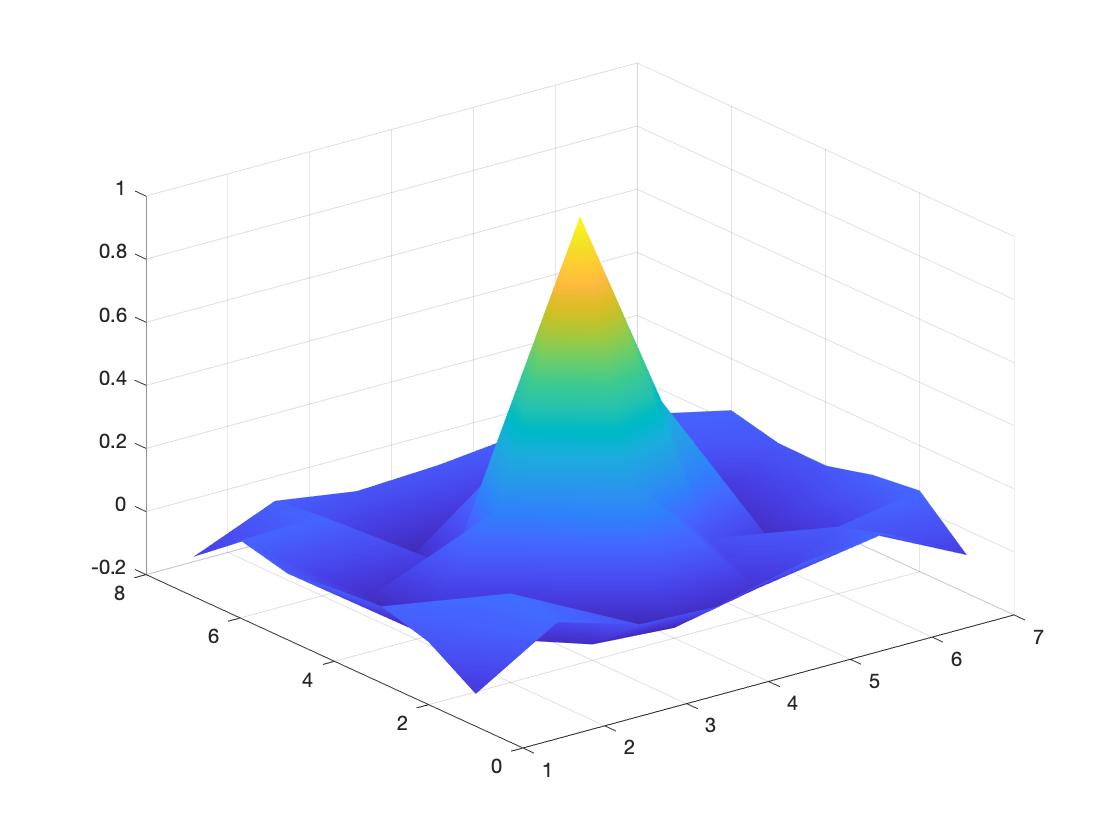}\label{fig:comparison_filterA}}
\subfloat[][\emph{Minus the Laplacian of Gaussian (LoG)}.]
{\includegraphics[height=5cm]{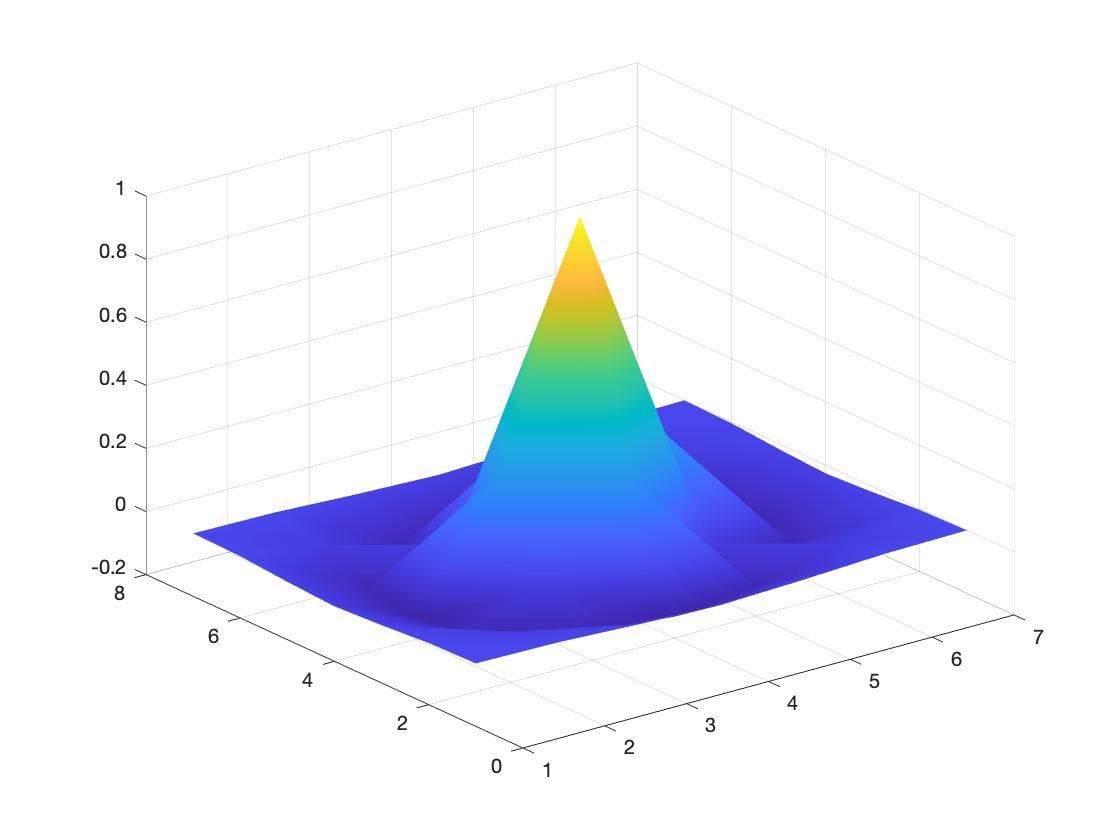}\label{fig:comparison_filterB}}
\caption{Comparison between the filter $\Psi^0$ and minus the LoG. The two have a high correlation of 95.21\% computed using the built-in function of MatLab \textit{corr2}.}
\label{fig:comparison_filter}

\end{figure*}

 \begin{figure*}
\centering
\subfloat[][\emph{Filter $\Psi^0$ of first layer of LGN-CNN}.] {\includegraphics[height=4.5cm]{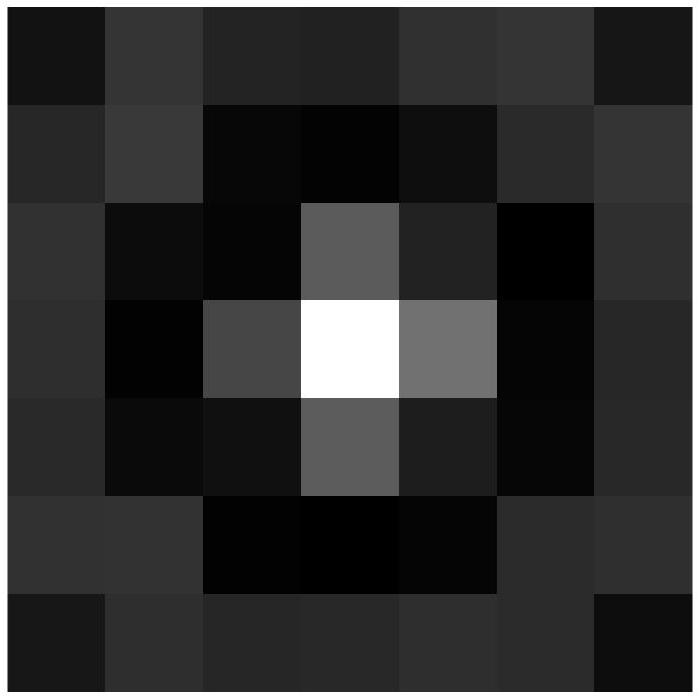}\label{fig:comparison_filterA_2d}} 
\hspace{1mm}
\subfloat[][\emph{Approximation of the filter $\Psi^0$ of first layer of LGN-CNN}.]
{\includegraphics[height=4.5cm]{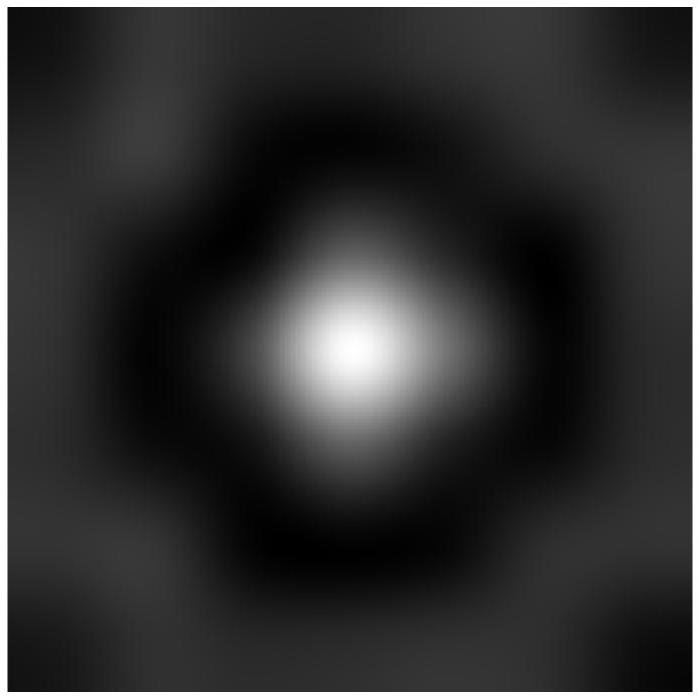}\label{fig:comparison_filterB_2d}}
\hspace{1mm}
\subfloat[][\emph{Minus the Laplacian of Gaussian (LoG)}.]
{\includegraphics[height=4.5cm]{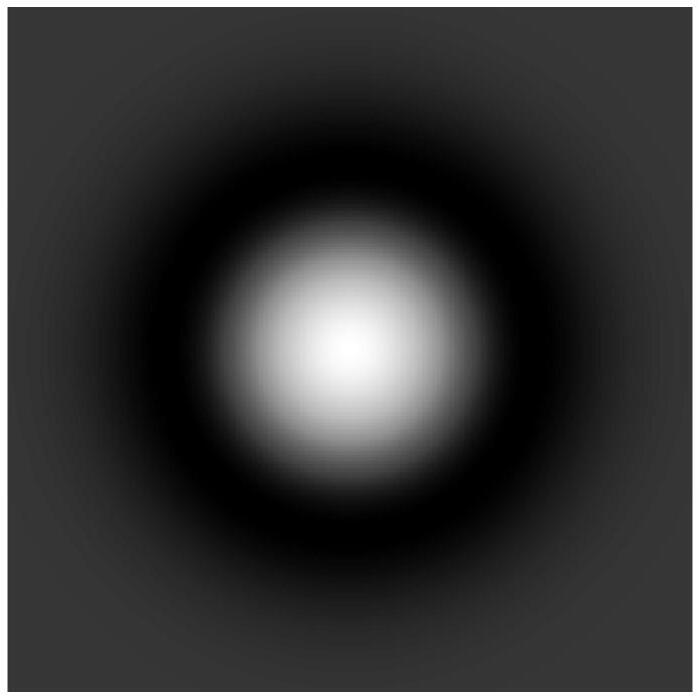}\label{fig:comparison_filterC_2d}}
\caption{The first figure shows the $7 \times 7$ filter $\Psi^0$ of the neural network. To better visualize the filter $\Psi^0$, we provide an approximating $280\times 280$ filter
 and minus the LoG.}
\label{fig:comparison_filter_2d}

\end{figure*}



Moreover, Figure \ref{fig:comparison_filterA_2d} shows the 2D image of $\Psi^0$ obtained after the training phase. Then, in Figures \ref{fig:comparison_filterB_2d} and \ref{fig:comparison_filterC_2d} we plot a $2D$ approximation of $\Psi^0$ as a $280\times 280$ filter  
 and minus the LoG in which the rotational symmetric pattern is clearer. 
 
 We would like to point out that in the LGN there exists both cells with on-center/off-surround as well as off-center/on-surround RFPs. In order to model these kinds of cells we have modified the current architecture by adding a second filter in the layer $\ell^0$. Since the STL10 dataset contains natural images with many contours we should expect to have oriented filters. This is not the case and Figure \ref{fig:2filt} shows the on-center/off-surround and off-center/on-surround obtained after the training phase. For simplicity, we have considered the model with a single filter.
 
 \begin{figure}
\centering
\includegraphics[height=3cm]{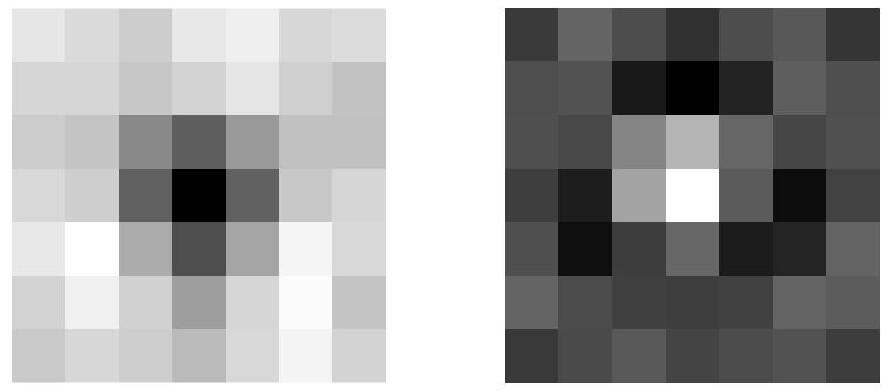}
\caption{On-center/off-surround and off-center/on-surround filters of $\ell^0$ with 2 filters.}
\label{fig:2filt}
\end{figure}
 

In order to quantify the rotational properties of the filter, we compute the correlation between the filter obtained in the first layer of the CNN and a new one, obtained via a rotation invariance symmetrization. We will test the behavior of the first layer of LGN-CNN for different values of the L2 regularization term and adding more convolutional layers.
In particular, we added to the previous architecture described in Section \ref{subs:Settings}  two convolutional layers composed by 32 filters of size
$3\times 3\times 32$ (each one followed by a batch normalization layer b and ReLU R ) after the third layer $\ell^3$.
We have added other  two convolutional layers with the same characteristics after the layer $\ell^4$ .

The normalized rotational invariant filter is obtained by the following procedure:
\begin{itemize}
\item Using the function \textit{imresize} with scale 3 and bilinear method to enlarge the filter;
\item Rotating the filter with \textit{imrotate} by 360 discrete angles between 1 and 360 degrees and summing them up;
\item Applying again the function \textit{imresize} with scale 1/3 and nearest method to recover a filter with the same size of $\Psi^0$;
\item Normalizing the filter by subtracting the mean and dividing it by $L^2$ norm.
\end{itemize}

We call the filter obtained in this way $\Psi^0_S$ and we estimate the correlation between $\Psi^0$ and $\Psi^0_S$ using the Matlab function \textit{corr2}. 

In Table \ref{Stat_tab} we have reported the correlations with different values of L2 regularization term and for two LGN-CNN architectures, the one we have introduced in Section \ref{subs:Settings} which is indicated by 'LGN + 4 layers' and the deeper one described in this Section indicated by 'LGN + 8 layers'. 
All the other training parameters are provided in Section \ref{subs:Settings}. 
As we can see for 'LGN + 4 layers' architecture the L2 regularization term that let $\Psi^0$ to be the closest rotational symmetric filter is 0.02. This is the filter we have shown in Figures \ref{fig:comparison_filter} and \ref{fig:comparison_filter_2d} and we have used through the paper. 
The Table \ref{Stat_tab} shows that the rotational symmetry of $\Psi^0$ is stable with respect to variations of L2 regularization term and adding convolutional layers.

\begin{table}[]
\centering
\begin{tabular}{|c| c | c |} 
 \hline
 L2 term & LGN + 4 layers &  LGN + 8 layers  \\ 
 \hline \hline
 0.01 & 88.3 \% & 85.40 \%  \\ 
 \hline
 0.02 & \cellcolor{cyan} 97.15 \% & 90.79 \%  \\ 
 \hline
 0.03 & 93.06 \% & 91.41 \% \\ 
 \hline
 0.04 & 95.61\% & 92.58 \%  \\ 
 \hline
 0.045 & 93.55 \% & \cellcolor{cyan} 94.79 \%  \\ 
 \hline
  0.05 & 95.44 \% & 94.11 \% \\ 
\hline
\end{tabular}
\caption{Correlation between $\Psi^0$ and $\Psi^0_S$ varying the L2 regularization term and the number of layers of the LGN-CNN architecture. Best rotational symmetric filters are selected in cyan for both architectures. }
\label{Stat_tab}
\end{table}

Furthermore, we have studied the properties of $\Psi^0$ varying the data augmentation (DA) applied to the dataset.
Table \ref{Psi_tab} shows the correlation of $\Psi^0$ with the LoG and the correlation of $\Psi^0$ with $\Psi^0_S$ with three different DA applied to the dataset. In the first one we have not applied any DA (No DA); in the second one we have randomly rotated the images of an angle of 0, $\frac{\pi}{2}$, $\pi$, $\frac{3\pi}{2}$ radiant (Mild DA); in the third one we have applied the DA described in Section \ref{subs:Settings} (Hard DA). From Table \ref{Psi_tab} it emerges that the introduction of rotation invariances in the dataset by rotating the images lightly affects the correlation with the LoG but gives stability to $\Psi^0$ allowing it to be  more rotational symmetric.

\begin{table}[]
\centering
\begin{tabular}{| c | c | c | c |} 
 \hline
  & No DA &  Mild DA & Hard DA  \\ 
 \hline \hline
 LoG corr & 93.77 \% & 93.68 \% & 95.21 \% \\ 
 \hline
 $\Psi^0_S$ corr  & 93.92 \% & 94.16 \% & 97.15 \%  \\ 
 \hline
 
\end{tabular}
\caption{Correlation between $\Psi^0$ and its LoG approximation and between $\Psi^0$ and $\Psi^0_S$ varying the data augmentation (DA) applied to the dataset.}
\label{Psi_tab}
\end{table}

In conclusion, thanks to the analysis performed on the properties of $\Psi^0$, we can argue that the structure of the architecture itself influences the shape of the filters and that the introduction of $\ell^0$ with a single filter $\Psi^0$ is a good model of the LGN.


%
%


\subsection{The second layer of LGN-CNN}\label{subs:RingachStudy}

\begin{figure*}[ht] 
\centering
\includegraphics[height=6cm]{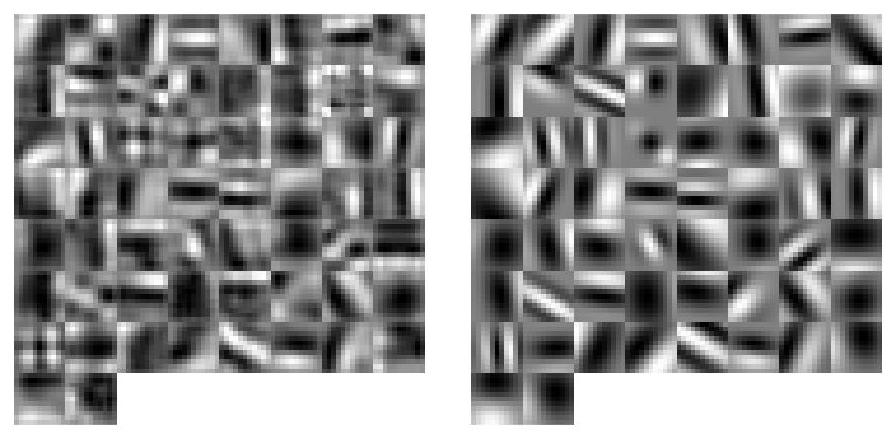}
\caption{On the left: filters from LGN-CNN. On the right: their approximation with the function (\ref{eq:Gabor}).}
\label{fig:filters_comp}
\end{figure*}

\begin{figure*}[ht]
\centering
\includegraphics[height=6cm]{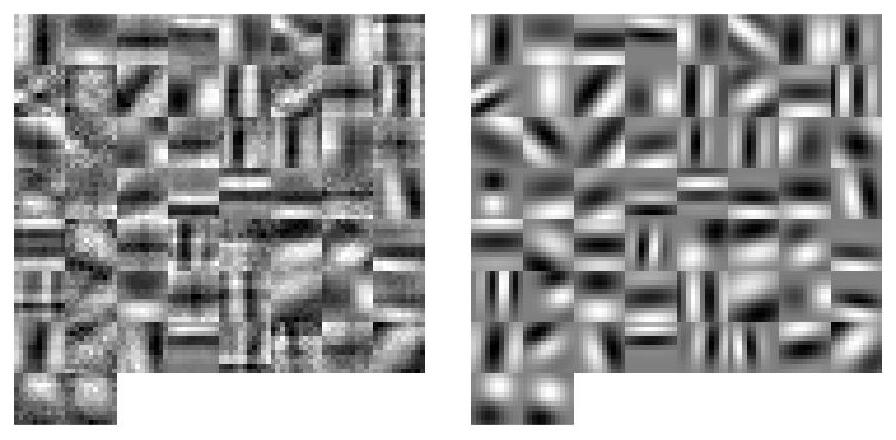}
\caption{On the left: filters from classical CNN. On the right: their approximation with the function (\ref{eq:Gabor}).}
\label{fig:filters_comp74}
\end{figure*}

To enforce the link between our architecture and the structure of the visual system, we have studied the filters in the second layer comparing them with some real data obtained on monkey in \cite{Ringach}. 
Therefore, we have trained two different CNNs, an LGN-CNN defined by the functional (\ref{eq:functional_LGNCNN_small}) 
\begin{equation}\label{eq:functional_LGNCNN_small}
\begin{split}
F (I) := & (  \sigma \circ FC^1 \circ R \circ \ell^3 \circ p^4_a \circ R \circ \ell^2 \\ & \circ p^4_m \circ R \circ \ell^1 \circ R \circ \ell^0 ) (I)
\end{split}
\end{equation}

and a classical CNN defined by the functional (\ref{eq:functional_classic}) in which we have eliminated the first convolutional layer $\ell^0$ and its following ReLU $R$, characteristic of our architecture.
\begin{equation}\label{eq:functional_classic}
\begin{split}
F (I) := & (  \sigma \circ FC^1 \circ R \circ \ell^3 \circ p^4_a \circ R \circ \ell^2 \\ & \circ p^4_m \circ R \circ \ell^1  ) (I)
\end{split}
\end{equation}


Let us note that in both architectures $\ell^1$ contains filters with Gabor shapes after training. This is a well-known result on the filters of the first convolutional layer of CNNs as for example in \cite{Poggio2}, \cite{Yamins}; however, the introduction of a first layer composed by a single filter does not change this behavior.
Indeed, we have studied the statistical distribution of these banks of filters confronting the results with the real data of Ringach. 

In the case of LGN-CNN we have not approximated the filters in $\ell^1$ directly but the filters obtained by the convolution with $\Psi^0$.

We have approximated the filters in the banks using the function (\ref{eq:Gabor}); Figure \ref{fig:filters_comp} shows some of the filters of LGN-CNN and their approximation and the same occurs in Figure \ref{fig:filters_comp74} in the case of classical CNN. 
Let us note that the mean correlation estimated with the built-in MatLab function \textit{corr2} increases from classical CNN to LGN-CNN from just 71.62\% to 93.50\%. This suggests that introducing the layer $\ell^0$ with a single filter better regularize the filters in the following convolutional layer $\ell^1$.


 \begin{figure*}[ht]
\centering
\subfloat[][\emph{Distribution of filters in $\ell^1$ \\ of  classical CNN defined by \\ functional (\ref{eq:functional_classic})}.] 
{\includegraphics[height=5cm]{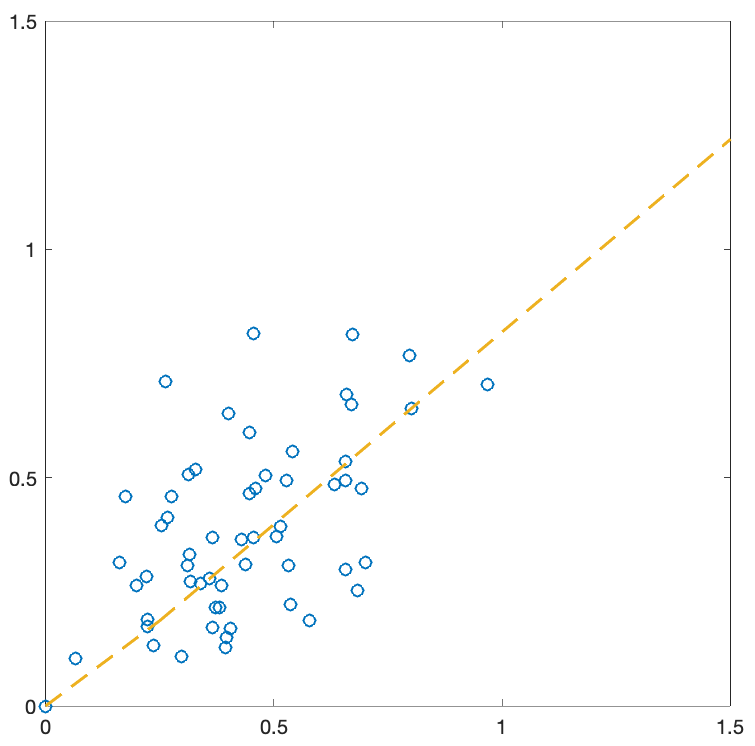}\label{fig:stat_distrA}} 
\subfloat[][\emph{Distribution of filters in $\ell^1$ \\ of LGN-CNN defined by  \\ functional (\ref{eq:functional_LGNCNN_small})}.]
{\includegraphics[height=5cm]{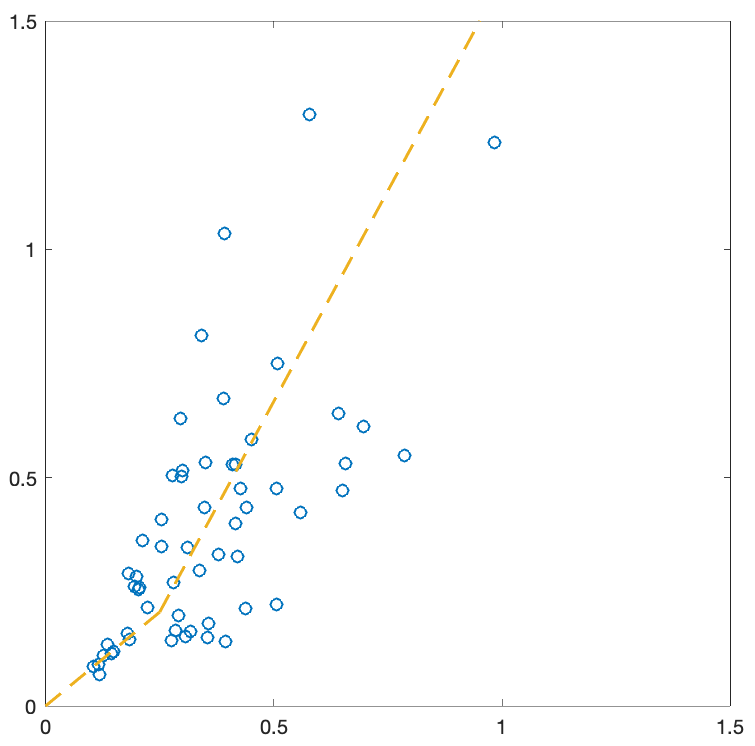}\label{fig:stat_distrB}}
\subfloat[][\emph{Distribution of RFPs of \\ simple cells from \cite{Ringach}}.]
{\includegraphics[height=5cm]{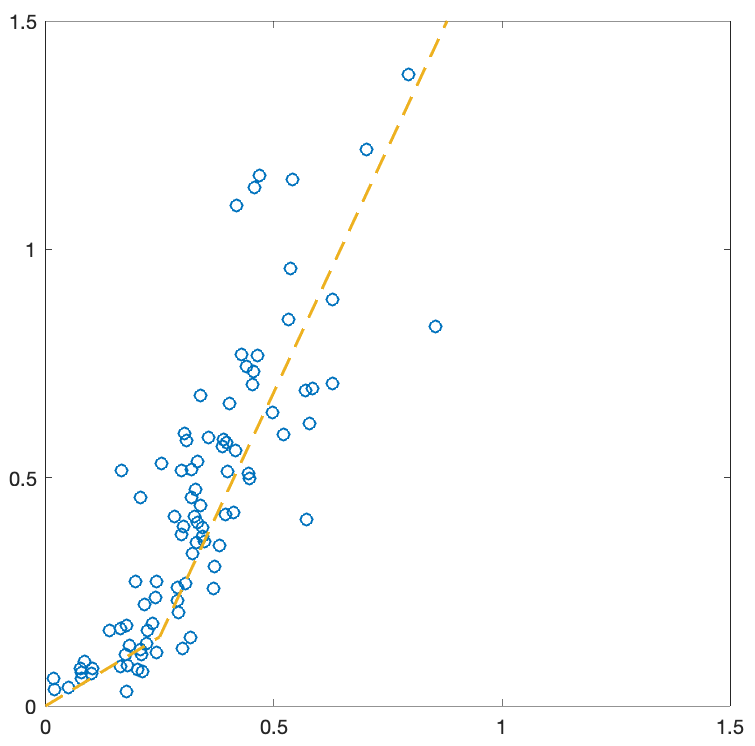}\label{fig:stat_distrC}}
\caption{Comparison between the statistical distribution on $(n_x, n_y)$ plane of filters of a classical CNN, of our architecture and of RFPs of real data.}
\label{fig:stat_distr}

\end{figure*}

We follow the same step as Ringach in \cite{Ringach} by plotting in the $(n_x, n_y)$ plane. In order to compare the plots, we looked for the distribution that best fits the neural data. In particular, it approximates the points closer to the origin with a line $y=\alpha x$ and then it approximates the rest of the points with a line starting from the end of the previous one.




Figure \ref{fig:stat_distr} shows the three plots. Let us note that introducing $\ell^0$ modifies the elongation of Gabor filters in $\ell^1$. In particular, in classical CNN the filters are often more elongated in the $x$ direction as we can see from the slope of the interpolating line in Figure \ref{fig:stat_distrA}. In Figure \ref{fig:stat_distrB} we can see that the slope changes greatly and that the filters become much more elongated in the $y$ direction. This behavior is the same in the case of RFPs (Figure \ref{fig:stat_distrC}) in which the distribution has a similar slope of LGN-CNN. This enforces more the link of LGN-CNN with the structure of the visual system motivating us to pursue in this direction.

\section{Retinex algorithm via learned kernels} \label{subs:Gen_appr_Retinex}            
\lhead[\fancyplain{}{\bfseries\thepage}]{\fancyplain{}{\bfseries\rightmark}}

In this section we  test the rotational symmetric filter on Retinex effect and contrast-based illusions. 
We described in Section \ref{subs:VisualSystem} how the model of 
\cite{Morel1} have been neurally interpreted in \cite{Gauge} and applied to LoG. Our approach 
aims to find the reconstructed image $\widetilde{I}$
for a general operator 
\begin{equation}
\label{eq:M}
\;\;\;\;\; \;\;\;\;\; \;\;\;\;\; \;\;\;\;\;  M : \Xi \subset\R^2 \to \R,
\end{equation}
solving the following problem 
\begin{equation}
\label{eq:PDE_Itilde}
\;\;\;\;\; \;\;\;\;\; \;\;\;\;\; \;\;\;\;\; M \ast \widetilde{I} = M \ast I.
\end{equation}

by finding the inverse operator of $M$ 
\begin{equation}
\label{eq:Mtilde}
\;\;\;\;\; \;\;\;\;\; \;\;\;\;\; \;\;\;\;\;  \widetilde{M}  : \Xi \subset\R^2 \to \R.
\end{equation}




By definition the  inverse $\widetilde{M}$ of the operator $M$ satisfies 
\begin{equation}
\label{eq:PDE_Mtilde}
\;\;\;\;\; \;\;\;\;\; \;\;\;\;\; \;\;\;\;\; \;\;\; M \ast \widetilde{M} = \delta.
\end{equation}

Applying the steepest descent method, we obtain an iterative process, which can be formally expressed as 
\begin{equation}
\label{eq:Mtilde_discrete}
\;\;\;\;\;\;\;\; \widetilde{M}_{t+1} = \widetilde{M}_{t} + dt \cdot (M \ast \widetilde{M}_{t} - \delta ). 
\end{equation}

The algorithm will stop at time $T$ when $\frac{|| \widetilde{M}_{T+1} - \widetilde{M}_{T}||_{L^1}}{dt} < \epsilon$, for a fixed error $\epsilon >0$. Thus, $|| M \ast \widetilde{M}_{T} - \delta  ||_{L^1} < \epsilon$ and indeed $\widetilde{M}_{T}$ would be a good approximation of $\widetilde{M}$.
Finally, we can compare the 
image $I$ with the reconstructed one $\widetilde{I}$ and see if any Retinex effects occur. 
Let us note that one  difference between our approach and the one proposed by Morel is that he imposed the  Neumann boundary conditions to the Poisson equation whereas, in our model, the Neumann boundary conditions imposed to the filter are inherited from  the inverse operator itself. Furthermore, we have faced the problem for a Laplacian operator to see if the results are similar to the Morel ones.
Indeed, in our Retinex algorithm we convolve a given visual stimulus $I$ with a fundamental solution $M$ and we reconstruct the perceived image $\widetilde{I}$ using the inverse operator $\widetilde{M}$.

\subsection{Study of the inverse operators} \label{sec:Inverse}            

Firstly, we show the inverse operators obtained through the algorithm described by equation (\ref{eq:Mtilde_discrete}). We start from the classical discrete Laplacian operator and then we will move to  convolutional operators, in particular a discrete LoG and the filter $\Psi^0$ of an LGN-CNN. In order to compare the inverse operators, we will show the 2D plots obtained by selecting a slice of the 3D inverse operator itself.

We  start by comparing the inverse of the discrete Laplacian with respect to its exact inverse operator given by the function $\log ( \sqrt{x^2 + y^2})$. Figure \ref{fig:Inverse_plot_Lapl} shows the approximation of the inverse of the Laplacian and  
\ref{fig:Inverse_plot_LaplLog} shows the exact inverse in the interval $[-5,5]$. We can note that 
the approximation is close to the exact one.

In the second row of Figure \ref{fig:comparison_Psi0_LoG}  we have compared the inverse operators of a LoG and $\Psi^0$ where Figure \ref{fig:Inverse_plot_LaplGauss_7x7_0p01} shows the LoG inverse operator and Figure \ref{fig:Inverse_plot_1stlayer_CNN_7x7} shows the $\Psi^0$ inverse operator. Since their inverse operators are close we expect similar Retinex effects in the next Section.

 \begin{figure*}[ht]
\centering
\subfloat[][\emph{Exact inverse of a Laplacian $\log (\sqrt{x^2 + y^2)}$ in the interval $[-5,5]$}.]
{\includegraphics[height=5cm]{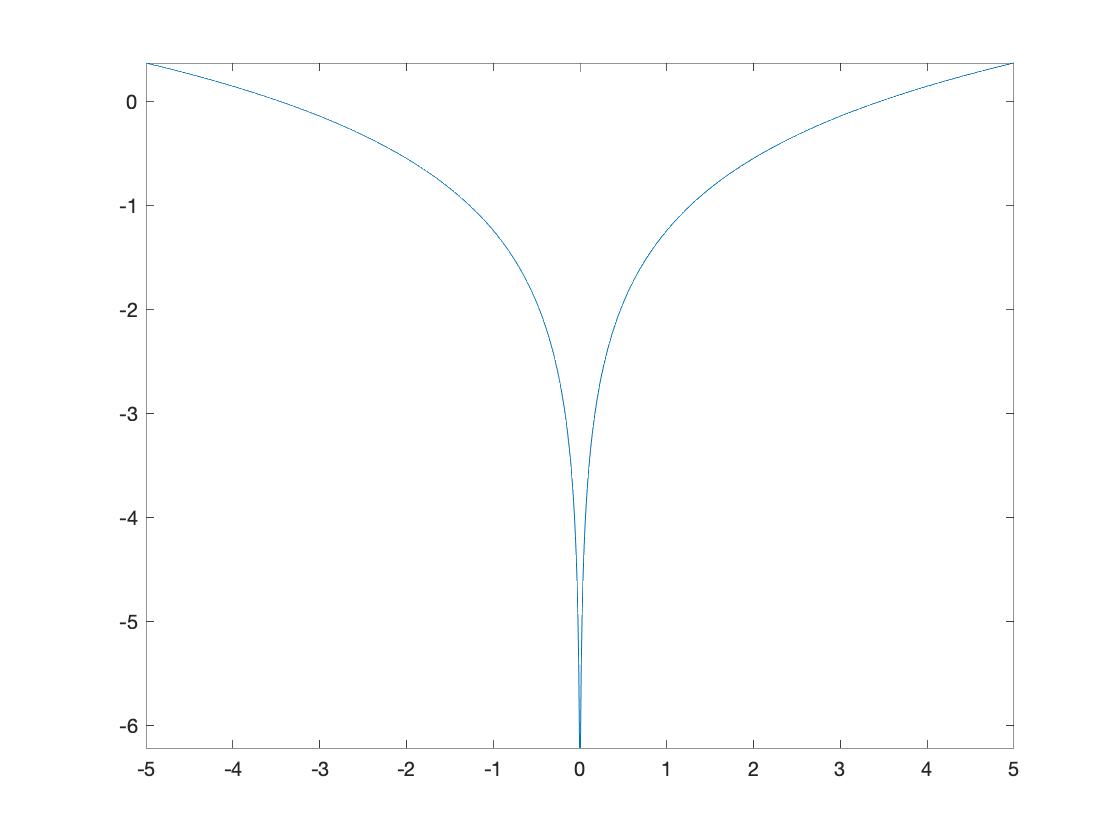}\label{fig:Inverse_plot_LaplLog}}
\subfloat[][\emph{Inverse of discrete Laplacian}.]
{\includegraphics[height=5cm]{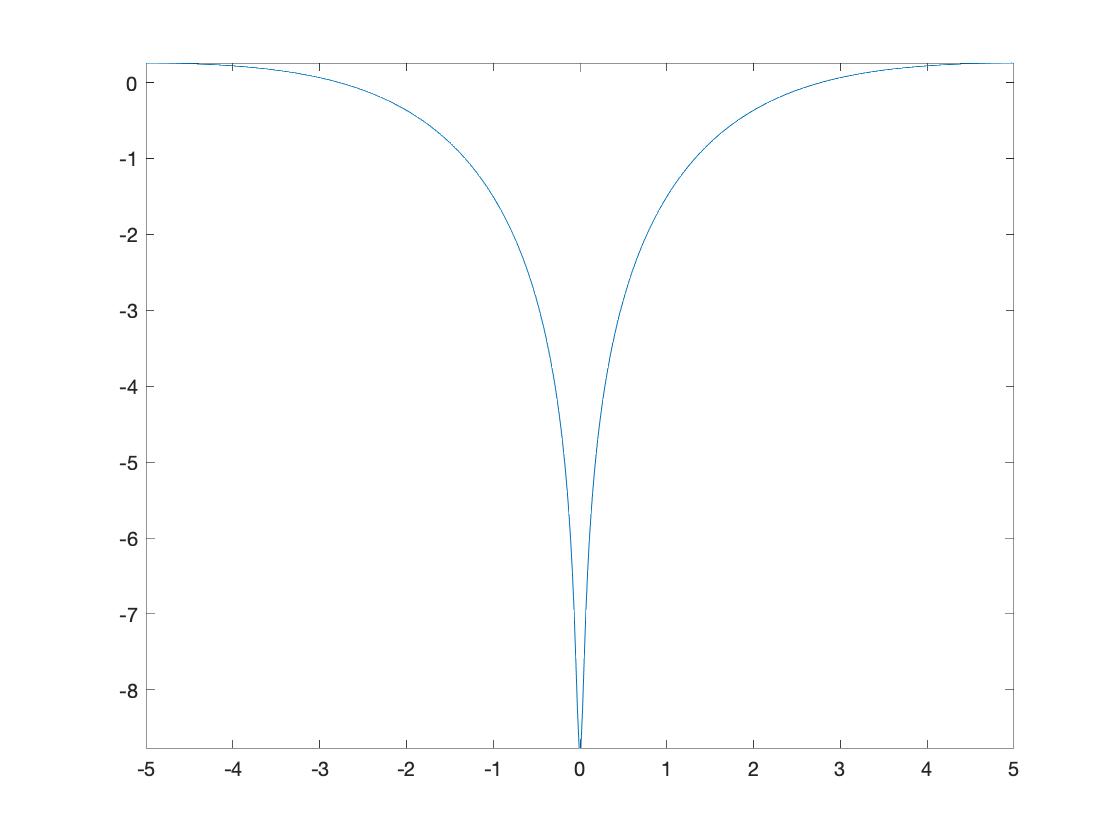}\label{fig:Inverse_plot_Lapl}}


\subfloat[][\emph{Inverse of LoG}.]
{\includegraphics[height=5cm]{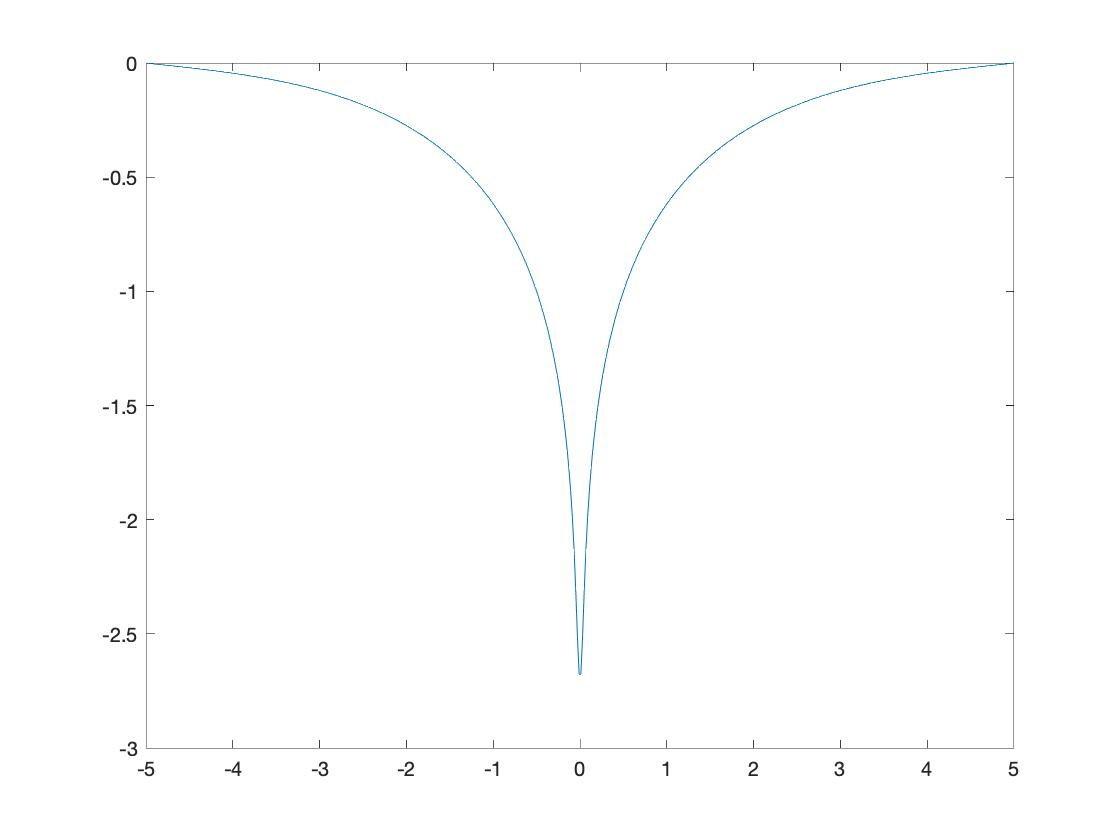}\label{fig:Inverse_plot_LaplGauss_7x7_0p01}}
\subfloat[][\emph{Inverse of the first layer of an LGN-CNN}.]
{\includegraphics[height=5cm]{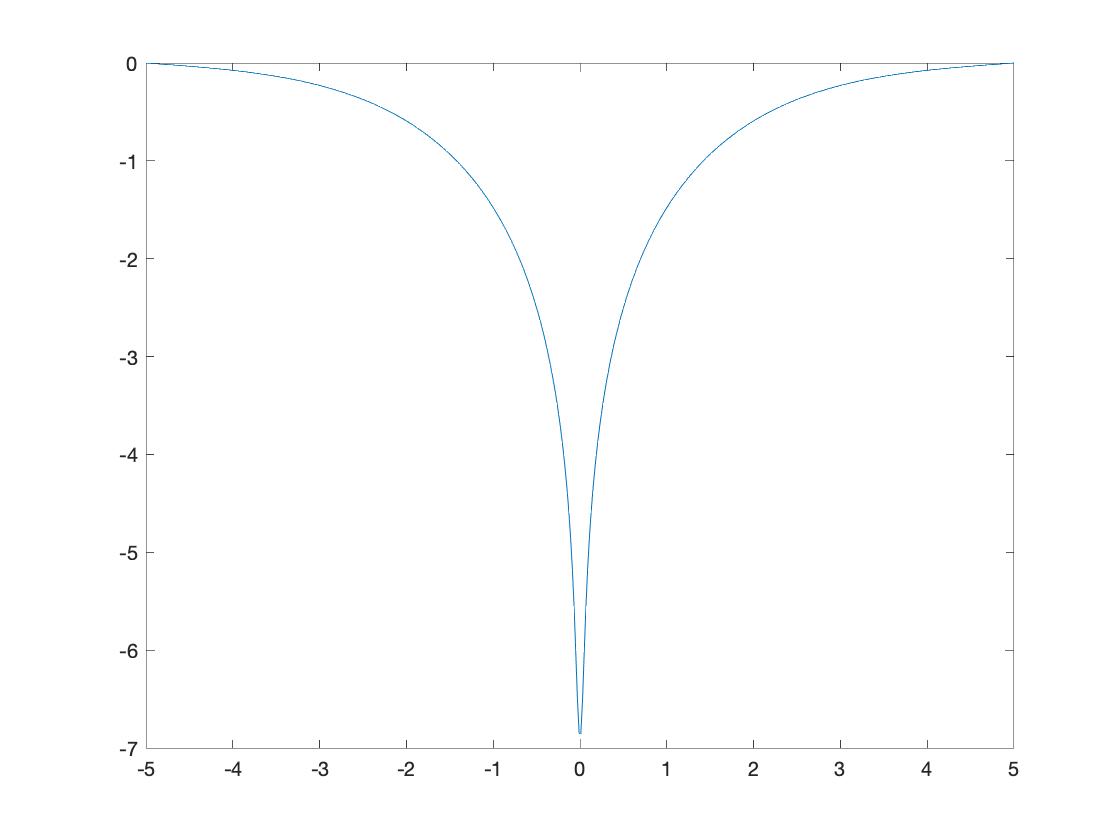}\label{fig:Inverse_plot_1stlayer_CNN_7x7}}
\caption{Comparison between inverse operators. First row: inverse of the discrete Laplacian and its exact inverse operator. Second row: inverse operators of a discrete LoG and the first filter $\Psi^0$ of an LGN-CNN.}
\label{fig:comparison_Psi0_LoG}

\end{figure*}

\subsection{Application of the algorithm} \label{subs:Appl_alg}            

In this Section, we aim to test our algorithm and our different operators to see if Retinex effects occur. Let us note that Retinex is an algorithm that mimics our color perception and does not try to improve the image quality. Indeed, the grayscale values should modify towards our color perception intensity values.

\subsubsection{Circles on a gradient background} 

 \begin{figure*}[ht]
\centering

\subfloat[][\emph{Starting image with two gray dots on a gradient background}.]
{\includegraphics[height=3cm]{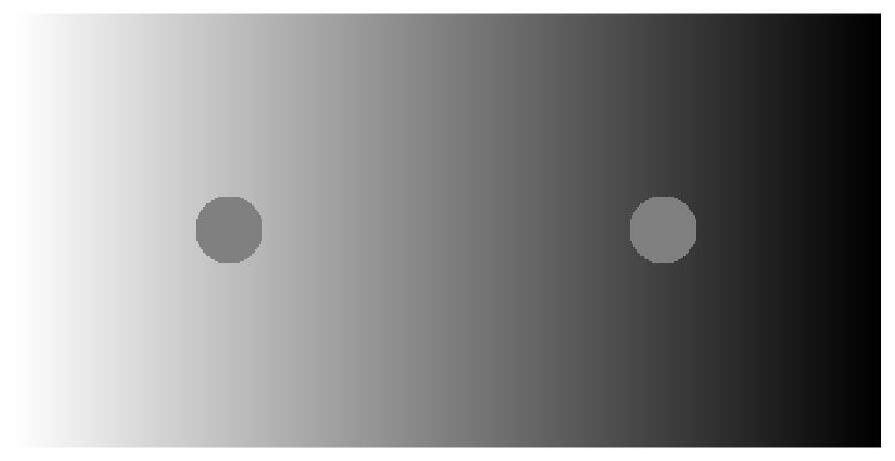}\label{fig:Im_2cerchi}}

\subfloat[][\emph{Retinex effects of exact inverse of Laplacian}.]
{\includegraphics[height=3cm]{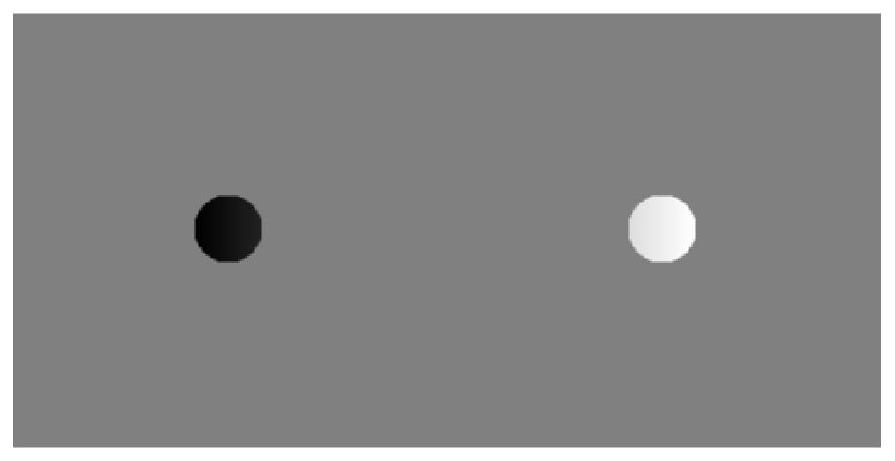}\label{fig:Im_Lapl_esatto}}
\subfloat[][\emph{Retinex effects of discrete Laplacian}.]
{\includegraphics[height=3cm]{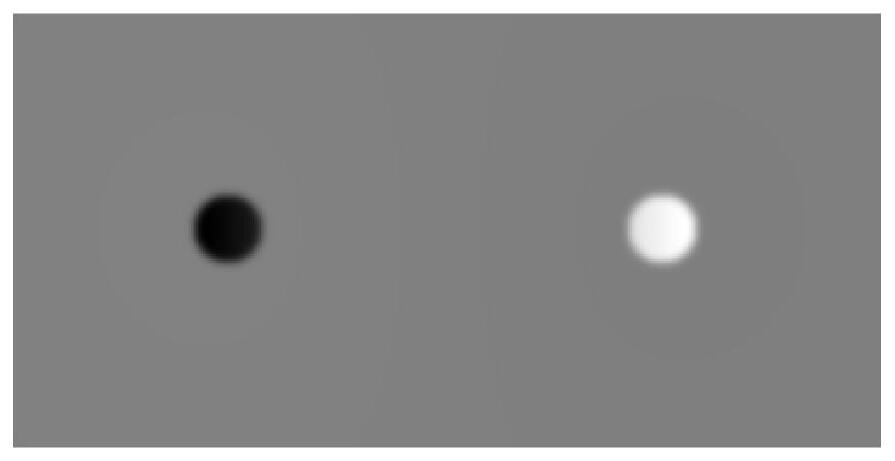}\label{fig:Im_Lapl_discreto}}

\subfloat[][\emph{Retinex effects of LoG}.]
{\includegraphics[height=3cm]{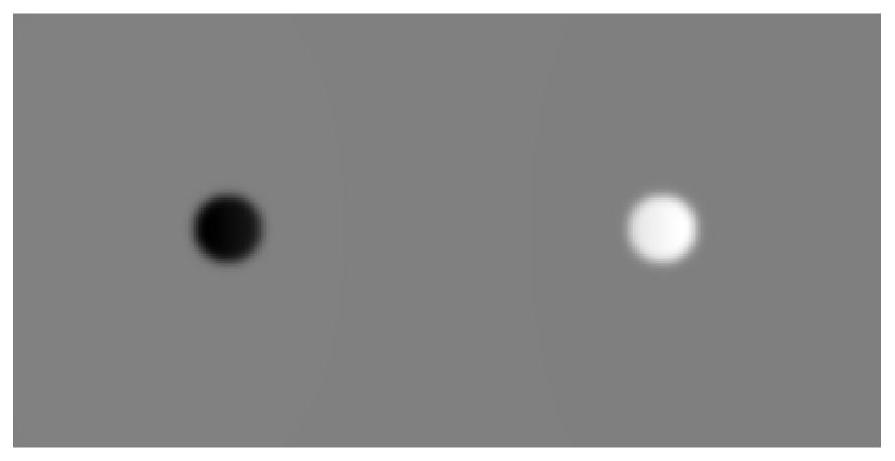}\label{fig:Im_LoG}}
\subfloat[][\emph{Retinex effects of $\Psi^0$ of LGN-CNN}.]
{\includegraphics[height=3cm]{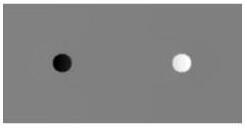}\label{fig:Im_Psi}}
\caption{Retinex effects of some inverse operators on starting image \ref{fig:Im_2cerchi}.}
\label{fig:Comparison_Circles_on_Gradient}

\end{figure*}

We start with a simple grayscale image (see Figure \ref{fig:Im_2cerchi}) in order to see how the different operators work. We are using the same image as in \cite{Morel1} and \cite{Morel2} in which they obtain remarkable Retinex effects. It has a background that shifts from white (value 1) to black (value -1) maintaining the gradient constant from left to right. There are also two gray dots (same value 0), the left one with a brighter background, the right one with a darker one. Because of the different backgrounds, our visual system perceives the two dots differently: the left one is perceived as darker with respect to its true color; the right one is perceived as brighter. 

We first consider the discrete Laplacian operator since we expect to obtain the same Retinex effects as in \cite{Morel1}. Figure \ref{fig:Im_Lapl_discreto} shows the result obtained with our method. It is clearly the same result of the experiments performed by Morel and the values obtained in the position of the dots are close to -0.9 and 0.9 (indeed really close to completely black and white dots) whereas the entire background is gray with 0 values.

Thus, we have considered the exact inverse of the Laplacian operator $\log (\sqrt{x^2 + y^2)}$. From Figure \ref{fig:Im_Lapl_esatto} it is clear that the Retinex effects occur to the gray dots. Furthermore we can note that the contours of the two dots are more defined w.r.t. the discrete Laplacian. Also in this case the values in the position of the dots are close to -0.9 and 0.9.

Then, we have performed our experiment with the inverse of a LoG. We can note from  Figure \ref{fig:Im_LoG} that the Retinex effects occur also in this case (with values close to -0.9 and 0.9), similarly to the Retinex effects of the discrete Laplacian. Also in this case the contours of the dots are not completely clear.

Finally, we have tested the inverse of the convolutional operator $\Psi^0$ of an LGN-CNN introduced in Section \ref{Second}. In Figure \ref{fig:Im_Psi} we can see that this operator shows Retinex effects where the values of the dots are close to -0.9 and 0.9. In this case the contours of the dots are even clearer than the ones obtained with the LoG and the discrete Laplacian.

To summarize, we have shown that our method reproduces the same results of \cite{Morel1} in the case of the Laplacian operator. Furthermore, we have tested it on other operators obtaining remarkable results. It is particularly interested the case of the convolutional operator $\Psi^0$ since it is able to show Retinex effects even if it is a learned filter with no a-priori structure, enforcing the link between the LGN-CNN architecture and the visual system.


\subsubsection{Adelson's checker}

We have also tested our algorithm on the Adelson's checker shadow illusion as in \cite{Morel1} and \cite{Morel2}. Since we are more interested in the Retinex effects of LoG and $\Psi^0$ operators we have analyzed their abilities on the grayscale image (see Figure \ref{fig:Im_cb}). 
It shows a checkerboard with light gray and dark gray square with a cylinder on it that shadows a part of the squares. In particular, the square labeled 'A' and the square labeled 'B' have the same grayscale value (in our case, since -1 is black and 1 is white, they have -0,4953 value). The illusion is built in such a way that, even if they have the same value, they are perceived in a completely different way. Indeed square 'A', which is outside the shadow and surrounded by light gray squares, is perceived as a dark gray square. On the other hand, square 'B', which is inside the shadow and is surrounded by dark gray squares, is perceived as lighter.

We expect that the two operators should reproduce the same behavior of our perception, in particular square 'A' should have a smaller value whereas square 'B' should have a bigger value. Figure \ref{fig:Im_LoG_cb} shows the Retinex effects obtained with the LoG operator. In particular, the value of square 'A' changes to -0,6553 whereas the value of square 'B' changes to 0,02351.
Thus, we have studied the behavior of $\Psi^0$ operator whose results are shown in Figure \ref{fig:Im_Psi_cb}. Even in this case Retinex effects occur where the value of square 'A' changes to -0,6035 and the value of 'B' changes to 0,2348.

Figures \ref{fig:Im_LoG_cb_crop}, \ref{fig:Im_LoG_cb_crop} and \ref{fig:Im_Psi_cb_crop} highlight the two squares 'A' and 'B' in the starting image and in the recovered images using the LoG and $\Psi^0$. In this way it is  clearer that the Retinex effects occur in both cases.

To summarize, we have shown that the filter $\Psi^0$ considered as a convolution operator shows Retinex effects really closed to Retinex effects of LoG. This enforces again the link between the structure of our architecture and the structure of LGN.

 \begin{figure*}
\centering

\subfloat[][\emph{Grayscale Adelson's checker \\ shadow illusion}.]
{\includegraphics[height=5cm]{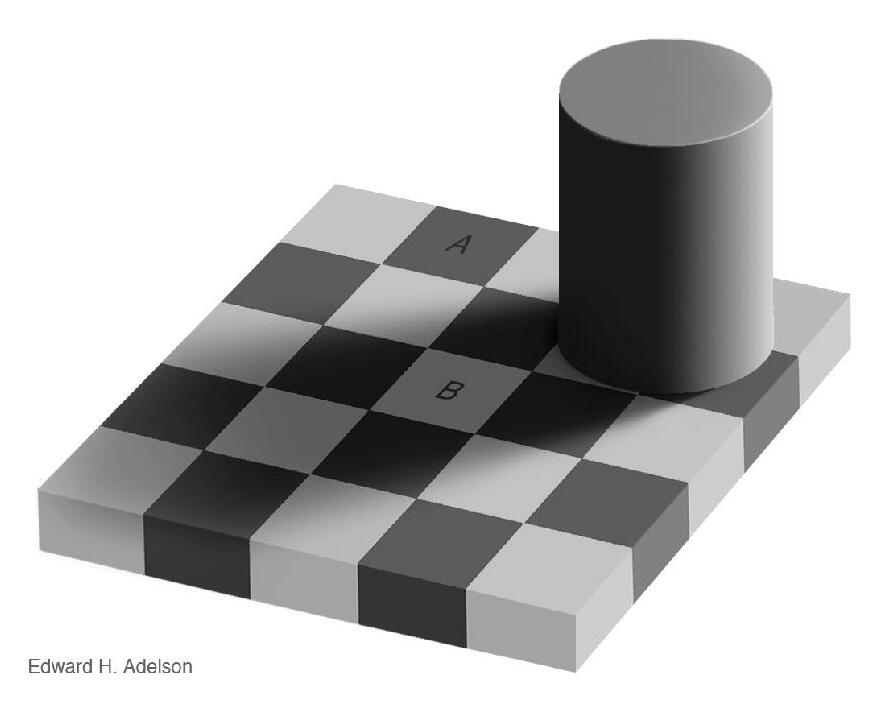}\label{fig:Im_cb}}
\subfloat[][\emph{Grayscale Adelson's checker \\ shadow illusion: the two squares}.]
{\includegraphics[height=5cm]{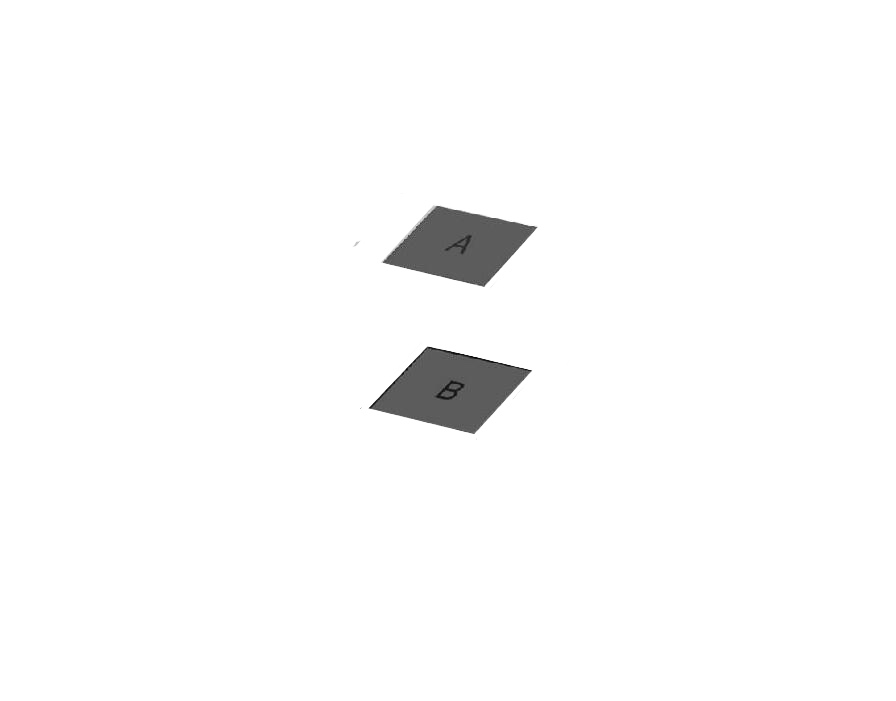}\label{fig:Im_cb_crop}}

\subfloat[][\emph{Retinex effects of LoG}.]
{\includegraphics[height=5cm]{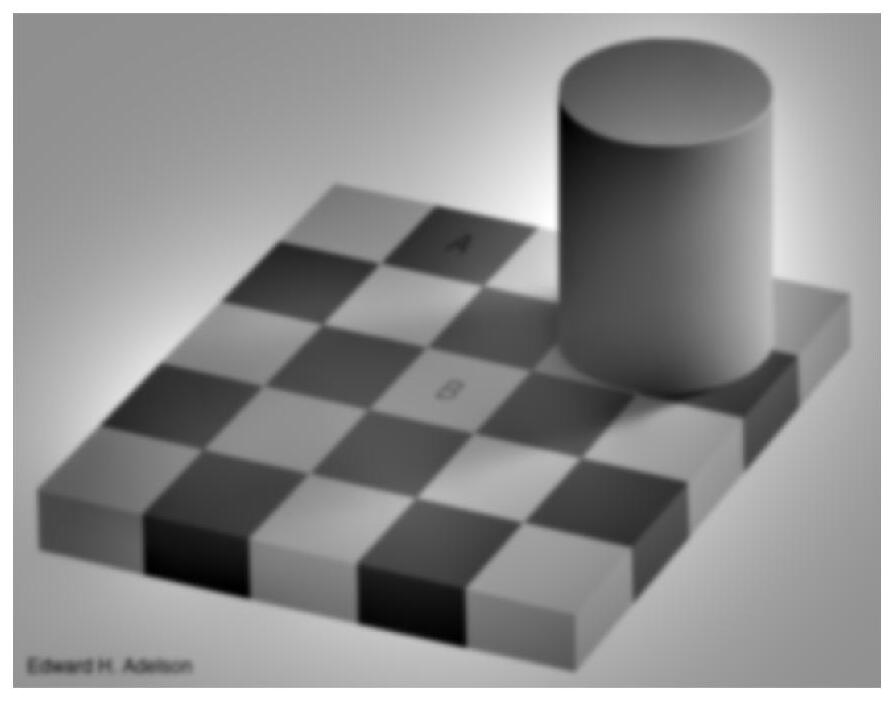}\label{fig:Im_LoG_cb}}
\subfloat[][\emph{Retinex effects of LoG: the two squares}.]
{\includegraphics[height=5cm]{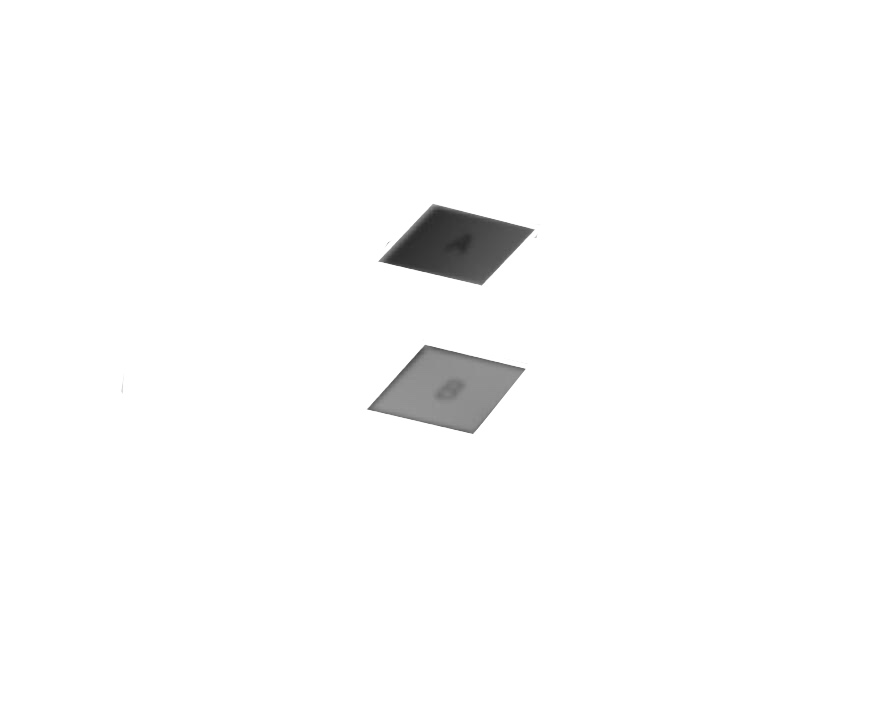}\label{fig:Im_LoG_cb_crop}}

\subfloat[][\emph{Retinex effects of $\Psi^0$ of LGN-CNN}.]
{\includegraphics[height=5cm]{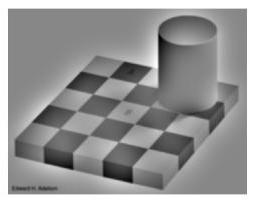}\label{fig:Im_Psi_cb}}
\subfloat[][\emph{Retinex effects of $\Psi^0$ of LGN-CNN: the two squares}.]
{\includegraphics[height=5cm]{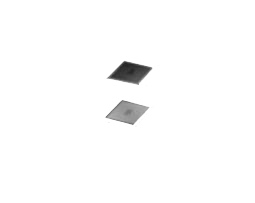}\label{fig:Im_Psi_cb_crop}}

\caption{Comparison between the Retinex effects of some operators on grayscale Adelson's checker shadow illusion.}
\label{fig:Comparison_Adelson_checker}

\end{figure*}

\subsection{Information transmission efficiency}

We are now interested to see if the layer $\ell^0$ have similar properties as regards the information transmission efficiency of the LGN. Indeed, it is well established (see e.g. \cite{Reinagel}, \cite{Zaghloul}, \cite{Uglesich}, \cite{Im}, \cite{Pregowska}) that the average firing rate of the retinal neurons that drive information to the LGN is much bigger than that of the LGN. In particular, LGN is able to delete spikes preserving the more informative ones leading to a loss of information. 

Thus, we have studied the information loss on the 8000 images of STL10 test set by convolving each image with $\Psi^0$ and computing the entropy from the histogram of gray scale values via the built-in MatLab function \textit{entropy}. It turns out that on average the entropy decreases from 7.04 to 5.97 with a loss of 15.27 \% of the information. Thus, we have reconstructed the images using the Retinex algorithm described in Section \ref{subs:Gen_appr_Retinex}. The average entropy increases to 6.92 leading to a loss of 1.83\% of the information w.r.t. the original dataset. This suggests that almost the entire information contained in the visual stimulus can be reconstructed via some feedback or horizontal connections, where the reconstructed stimulus becomes invariant w.r.t. lightness constancy. Figure \ref{fig:Ex_horse} shows an example of the convolution and the reconstruction performed on an image of the dataset with the corresponding gray values histogram with respect to the entropy is calculated.

\begin{figure}
\centering
\includegraphics[height=6.25cm]{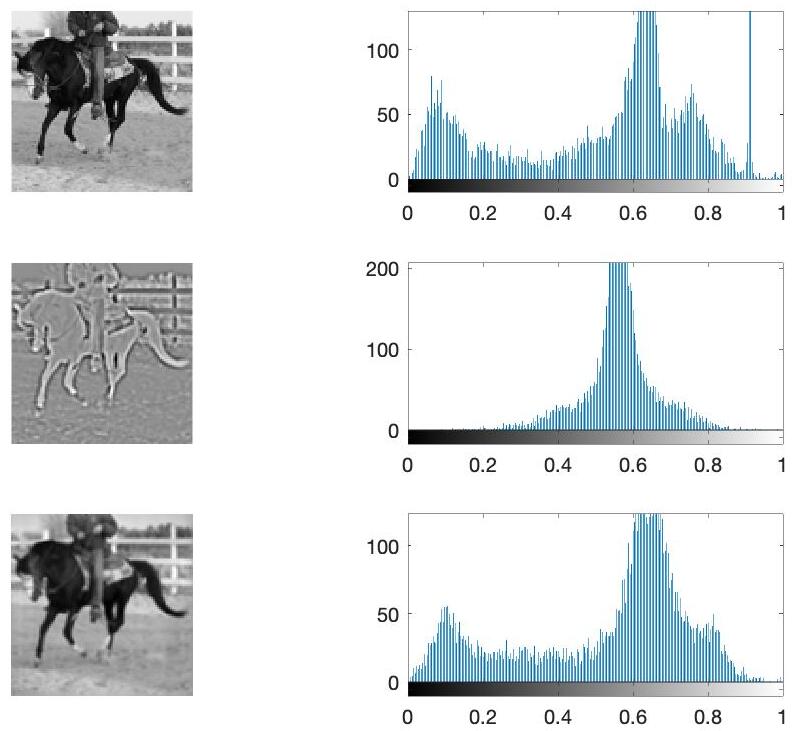}
\caption{In order on the left: a grayscale image $I$,  the convolved image $\widetilde{I} $ and the reconstructed image $\widetilde{I}$ via eq. (\ref{eq:PDE_Itilde}). On the right: the corresponding histograms of the grayscale values.}
\label{fig:Ex_horse}
\end{figure}

\section{Conclusions}          
\lhead[\fancyplain{}{\bfseries\thepage}]{\fancyplain{}{\bfseries\rightmark}}

The study of the role of the LGN in the visual system and the rotation invariance properties of the RFPs of its cells has leaded our research to the introduction of a CNN architecture that mimics this structure. In particular, we have added to a CNN a first convolutional layer composed by a single filter which attains a rotational symmetric pattern. The filter $\Psi^0$ has inherited this property from the modified architecture of the neural network. 

We have also shown that it is not only rotational symmetric but it also obtains a LoG shape that approximates the RFPs of the LGN cells. In order to study these similarities, we have shown that the Retinex effects of the LoG and $\Psi^0$ are really closed to each other. These behaviors enforce the link between the visual system structure and the architecture of CNNs.

Furthermore, we have analyzed the statistical distribution of the filters of the second convolutional layer that attain a Gabor shape even with the introduction of the first layer. We have shown that the statistical distribution becomes closer to the real data of RFPs of simple cells in V1 from \cite{Ringach} enriching the connections with the neural structure.

Then, we have faced the theoretical problem regarding the rotation symmetry of the first convolutional layer. We have studied the solution of a convex functional composed by a convolution and a ReLU. Thanks to uniqueness of such functional, we have shown that the solution $\Psi^0$ has to be rotational invariant. Then, we have built an architecture composed by a single filter $\Psi^0$ and a ReLU in which $\Psi^0$ has attained a rotational invariant pattern close to a Gaussian.

In the future we will face the theoretical problem regarding the rotation symmetry of the first convolutional layer for a general LGN-CNN. Furthermore, we will analyze the modifications that in an LGN-CNN occur to the bank of filters of other convolutional layers of deeper architecture, comparing them with neural data. We will also introduce an autoencoder associated to this architecture, which can reconstruct perceived images 
with the Retinex effect.


\appendix\section{Appendix } \label{Third}            
\lhead[\fancyplain{}{\bfseries\thepage}]{\fancyplain{}{\bfseries\rightmark}}

\subsection{Rotation symmetry of $\Psi^0$} 

In this section we define the setting in which we study the rotation symmetry of $\Psi^0$. 


Let us consider the architecture of an LGN-CNN in which we can split the first convolutional layer composed by only one filter from the rest of the neural network which will be fixed. Thus, this first layer can be approximated by a function $\Psi^0: \R^2 \to \R$, assuming $\Psi^0 \in L^1_{loc} (\R^2)$. A general image can be defined as a function $I: \R^2 \to \R$ where we assume $I \in L^1_{loc} (\R^2)$. 
We can consider a subset $\Gamma \subset L^1_{loc} (\R^2)$ of all the images where for each image $I$ is defined a labelling $y  : \Gamma \to \R$, where $y(I)$ is the corresponding label to image $I$.

We require some rotational invariant properties on this set $\Gamma$. In particular, if we consider a rotation $R_\theta$ of an angle $\theta$ on $R^2$ plane around its center then the composition $I_\theta = R_\theta ( I) = I (R_{-\theta} (x))$ is still an image and we can also assume that  $I_\theta \in \Gamma $ (i.e., that the subset $\Gamma$ is close under rotation). 
Furthermore, the rotated image should maintain the same label, i.e., $y(I_\theta) = y(I)$.
Since the images we will consider in this problem are grayscale ones (thus with values between 0 and 1) we can also assume that the images are normalized with $|| I ||_{L^1_{loc}} \leq 1$. Summarizing all these properties   we can assume that $\Gamma$ is a compact set on $L^1_{loc}$, $\Gamma = \{ I \in L^1_{loc} ; || I ||_{L^1_{loc}} \leq 1 \}$

Thus, the rest of the neural network will be defined by a nonlinear functional
\begin{equation} \label{eq:C}
\begin{split} 
\;\;\;\;\;\;\;\;\;\;\;\;\;\;\;C & : L^1_{loc} (\R^2)  \to L^1_{loc} (\R^2)  \\
C & (f(x)) = \max (0, f(x)) 
\end{split}
\end{equation}
which is one of the more frequently nonlinear function used in CNN, called ReLU. 
And then we can define
\begin{equation}\label{eq:F}
\begin{split}
\;\;\;\;\; \;\;\;\;\; F &:   L^1_{loc} (\R^2) \times L^1_{loc} (\R^2) \to  \R \\  F & (I, \Psi^0)   :=\int_{\R^2}C \left( ( I \ast \Psi^0 ) (z) \right)  dz.
\end{split}
\end{equation}
Then $F(I, \Psi^0) $ is the label that our architecture associates to the image $I$ and should eventually approximates $y(I)$.

Thus, our aim is to find a function $\Psi^0$ in such a way that $F$ approximates well the known functional $y(I)$. 
In particular we would like to minimize the following functional
\begin{equation}
\label{eq:minPsi}
\min_{\Psi^0 \in L^1_{loc} (\R^2)} \int_\Gamma  | F(I, \Psi^0)  - y(I) |  d \mu(I)
\end{equation}

where the integral done over the set $\Gamma$ is a Bochner's integral (see e.g., Section 5 of chapter V of \cite{Yosida} and \cite{Mikusinski}).

Our aim is to find a function $\Psi^0$ that attains the minimum of (\ref{eq:minPsi}) where $F$ is defined in (\ref{eq:F}) and $C$ is defined in (\ref{eq:C}). We would like to find out if there exist some rotational invariant properties on the function $\Psi^0$. 
Let us note that the functional defined in (\ref{eq:minPsi}) is convex thanks to the convexity of the function $C$ defined in (\ref{eq:C}) and continuous. Then the existence and uniqueness of a solution is guaranteed (see e.g., Section 1.4 of \cite{Brezis}).

\begin{rem}
\label{remark:rotation}
Let us consider two function $f,g \in L^1 (\R^2)$ and a rotation $R_\theta$ of an angle $\theta$. Then 
$$  f \ast R_\theta (g)  (x) =  (R_{-\theta}(f) \ast  g)  (R_{-\theta}(x)) $$
\end{rem}

\begin{proof}
\begin{equation*}
\begin{split}
 f \ast R_\theta (g)  (x) = & \int_{\R^2} f(x-y)  R_\theta (g(y)) dy \\ = & \int_{\R^2} f(x-y)  g(R_{-\theta} (y)) dy =
\end{split}
\end{equation*}
then we substitute $y' = R_{-\theta} (y)$ whose Jacobian has determinant equal to 1
\begin{equation*}
\begin{split}
\;\;\;\;\;\;\;\;\;\;\;\;\;\;\;\;\;\;\;\; = & \int_{\R^2} f(x-R_\theta (y'))  g(y') dy' \\ = &  \int_{\R^2} f(R_\theta(R_{-\theta}(x))-R_\theta (y'))  g(y') dy'  \\  
 = & \int_{\R^2} f(R_\theta(R_{-\theta}(x)-y'))  g(y') dy'  \\ = & \int_{\R^2} R_{-\theta}(f(R_{-\theta}(x)-y'))  g(y') dy'  =
\end{split}
\end{equation*}
then we substitute $y'' = R_{-\theta} (x) - y'$ whose Jacobian has determinant equal to 1
\begin{equation*}
\begin{split}
\;\;\;\;\;\;\;\;\;\;\;\;\;\;\;\;\;\;\;\; = & \int_{\R^2} R_{-\theta}(f(y''))  g(R_{-\theta}(x) - y'') dy''  \\ = & (R_{-\theta}(f) \ast  g)  (R_{-\theta}(x))
\end{split}
\end{equation*}
and this conclude the proof. 
\end{proof}

Now we can demonstrate the rotational invariance of $\Psi^0$.

\begin{teo}
\label{th:rotationinv}
Let $\Psi^0$ be a solution to the problem (\ref{eq:minPsi}) where $F$ is defined in (\ref{eq:F}) and $C$ is defined in (\ref{eq:C}). Then $\Psi^0$ is rotational invariant.

\end{teo}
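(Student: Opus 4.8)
The plan is to exploit the uniqueness of the minimizer of (\ref{eq:minPsi}) — recalled just before the statement — together with the invariance of the functional under the substitution $\Psi^0 \mapsto R_\theta(\Psi^0)$, exactly as in the heuristic given after the description of the architecture. Write $J(\Psi) := \int_\Gamma |F(I,\Psi) - y(I)|\, d\mu(I)$ for the functional in (\ref{eq:minPsi}) and let $\Psi^0$ be its unique minimizer. If I can show $J(R_\theta(\Psi^0)) = J(\Psi^0)$ for every rotation $R_\theta$, then $R_\theta(\Psi^0)$ is also a minimizer and uniqueness forces $R_\theta(\Psi^0) = \Psi^0$, which is the claim.

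First I would reduce the rotation of the filter to a rotation of the image at the level of the scalar output $F$. Applying Remark \ref{remark:rotation} with $f = I$ and $g = \Psi^0$ gives
$$\bigl(I \ast R_\theta(\Psi^0)\bigr)(x) = \bigl(R_{-\theta}(I) \ast \Psi^0\bigr)\bigl(R_{-\theta}(x)\bigr),$$
and then, substituting $w = R_{-\theta}(z)$ (Jacobian determinant $1$) in the integral defining $F$ in (\ref{eq:F}),
\begin{equation*}
\begin{split}
F\bigl(I, R_\theta(\Psi^0)\bigr) &= \int_{\R^2} C\bigl( (R_{-\theta}(I) \ast \Psi^0)(R_{-\theta}(z)) \bigr)\, dz \\
&= \int_{\R^2} C\bigl( (R_{-\theta}(I) \ast \Psi^0)(w) \bigr)\, dw = F\bigl(R_{-\theta}(I), \Psi^0\bigr).
\end{split}
\end{equation*}
So rotating $\Psi^0$ by $\theta$ produces the same number as rotating the input image by $-\theta$.

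Next I would push this identity through the outer Bochner integral over $\Gamma$. Using the display above and the change of variable $I' = R_{-\theta}(I)$,
\begin{equation*}
\begin{split}
J\bigl(R_\theta(\Psi^0)\bigr) &= \int_\Gamma \bigl| F(R_{-\theta}(I), \Psi^0) - y(I) \bigr|\, d\mu(I) \\
&= \int_\Gamma \bigl| F(I', \Psi^0) - y(R_\theta(I')) \bigr|\, d\mu(I'),
\end{split}
\end{equation*}
where I have used that $\Gamma$ is closed under rotation and that $\mu$ is invariant under the rotation map $I \mapsto R_{-\theta}(I)$. Since $y(R_\theta(I')) = y(I')$ by the label-invariance assumption, the right-hand side is exactly $J(\Psi^0)$; hence $R_\theta(\Psi^0)$ is a minimizer, and by uniqueness $R_\theta(\Psi^0) = \Psi^0$ for all $\theta$, i.e. $\Psi^0$ is rotationally invariant.

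The step I expect to require the most care is the measure-theoretic bookkeeping in the last display: one has to make precise, as part of the "rotational invariant properties" demanded of $\Gamma$, that $I \mapsto R_{-\theta}(I)$ is a measurable bijection of $\Gamma$ onto itself which preserves $\mu$, and that this change of variable is legitimate for the Bochner integral of the scalar integrand $I \mapsto |F(I,\Psi^0) - y(I)|$ (one should also check that $I \ast \Psi^0$ is well defined on the relevant class, since Remark \ref{remark:rotation} is stated for $L^1(\R^2)$ while the setup uses $L^1_{loc}$). Everything else is essentially Remark \ref{remark:rotation} plus rotation-invariance of Lebesgue measure, and convexity enters only through the final appeal to uniqueness; should strict convexity be unavailable, an alternative is to replace $\Psi^0$ by its rotational average $\frac{1}{2\pi}\int_0^{2\pi} R_\theta(\Psi^0)\, d\theta$ and use convexity of $J$ to see that this average is still a minimizer, hence — being rotationally invariant by construction — still a valid answer.
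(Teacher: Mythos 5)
Your proof follows essentially the same route as the paper's: Remark \ref{remark:rotation} transfers the rotation from the filter to the image, rotation invariance of the Lebesgue integral removes the rotation in the output variable, closure of $\Gamma$ under rotation together with label invariance gives $J(R_\theta(\Psi^0)) = J(\Psi^0)$, and uniqueness of the minimizer (from convexity) concludes. You are in fact slightly more explicit than the paper about the change of variables $I' = R_{-\theta}(I)$ in the Bochner integral and the required invariance of $\mu$, a step the paper passes over implicitly.
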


\begin{proof}
Let us consider the rotated solution $R_\theta(\Psi^0)$ of an angle $\theta \in [0, 2\pi]$. 

$$  \int_\Gamma  | \int_{\R^2}C \left( ( I \ast R_\theta(\Psi^0) ) (z) \right)  dz  - y(I) |  d \mu(I)  =$$
and because of remark \ref{remark:rotation} 
$$  =\int_\Gamma  | \int_{\R^2}C \left( ( R_{-\theta}(I) \ast \Psi^0 ) (R_{-\theta}(z)) \right)  dz  - y(I) |  d \mu(I)  $$
Since for a general $f \in L^1 (\R^2)$ it holds
$$  \int_{\R^2} f (R_{\theta}(x)) dx    = \int_{\R^2} f(x) dx,$$ then 
$$  =\int_\Gamma  | \int_{\R^2}C \left( ( R_{-\theta}(I) \ast \Psi^0 ) (z) \right)  dz  - y(I) |  d \mu(I)  =$$
Finally, since $\Gamma$ is close under rotation and $y(I) = y(R_{-\theta}(I))$ by hypothesis 
$$  =\int_\Gamma  | \int_{\R^2}C \left( ( (I) \ast \Psi^0 ) (z) \right)  dz  - y(I) |  d \mu(I)  $$
Thus, $R_\theta(\Psi^0)$ attains the same value of $\Psi^0$ for every choice of $\theta$. But thanks to the uniqueness of the solution of this problem because of the compactness of $\Gamma$ and convexity of the functional (\ref{eq:minPsi}), $\Psi^0=R_\theta(\Psi^0)$ $\forall \theta \in [0, 2\pi]$ and this concludes the proof.
\end{proof}

\begin{rem}
Let us note that the first part of the proof of theorem \ref{th:rotationinv} is valid for a general LGN-CNN architecture. In particular, if we have a solution $\Psi^0$ to the minimization problem (\ref{eq:minPsi}), then every rotation $R_{\theta} (\Psi^0)$ of an angle $\theta \in [0, 2 \pi]$ is still a solution. The uniqueness of solution in theorem \ref{th:rotationinv} guarantees that $\Psi^0$ is rotational invariant whereas for a general LGN-CNN this does not hold.
\end{rem}
\subsection{Testing the theorem on the same architecture}

\begin{figure}[ht]
\centering
\includegraphics[height=4cm]{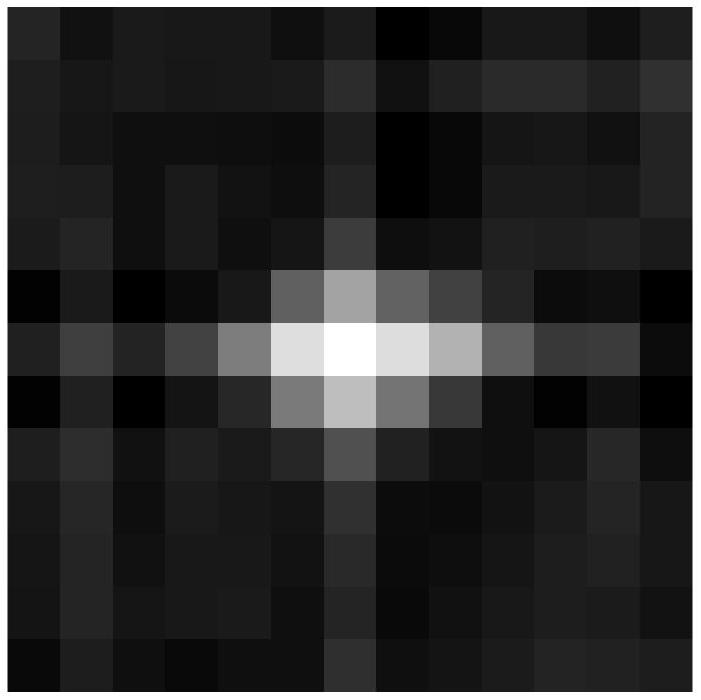} 
\caption{Filter $\Psi^0$ of LGN-CNN obtains after training.}
\label{fig:LGN_CNN_simple_prob_Psi0}
\end{figure}

In this Section we  face the same problem of the proof in Section \ref{Third} to see if $\ell^0$ becomes rotational invariant. Since we are going to use a simple architecture composed by a single convolutional layer $\ell^0$ with a single filter $\Psi^0$ and a ReLU we do not expect to obtain a LoG shape filter as in Section \ref{Second}. Indeed, in the previous case the architecture of LGN-CNN has other convolutional layers whose aim was to further analyze the image and in particular the contours of the objects. For this reason we  expected in that case that the LGN-CNN behaved similarly to the LGN and the V1, i.e., $\Psi^0$ had a LoG shape. On the other hand, in the test we are performing now we can only expect a rotational invariant filter as stated in theorem \ref{th:rotationinv}.

 \begin{figure*}[ht]
\centering
\subfloat[][\emph{Filter $\Psi^0$ of first layer of LGN-CNN}.]
{\includegraphics[height=5cm]{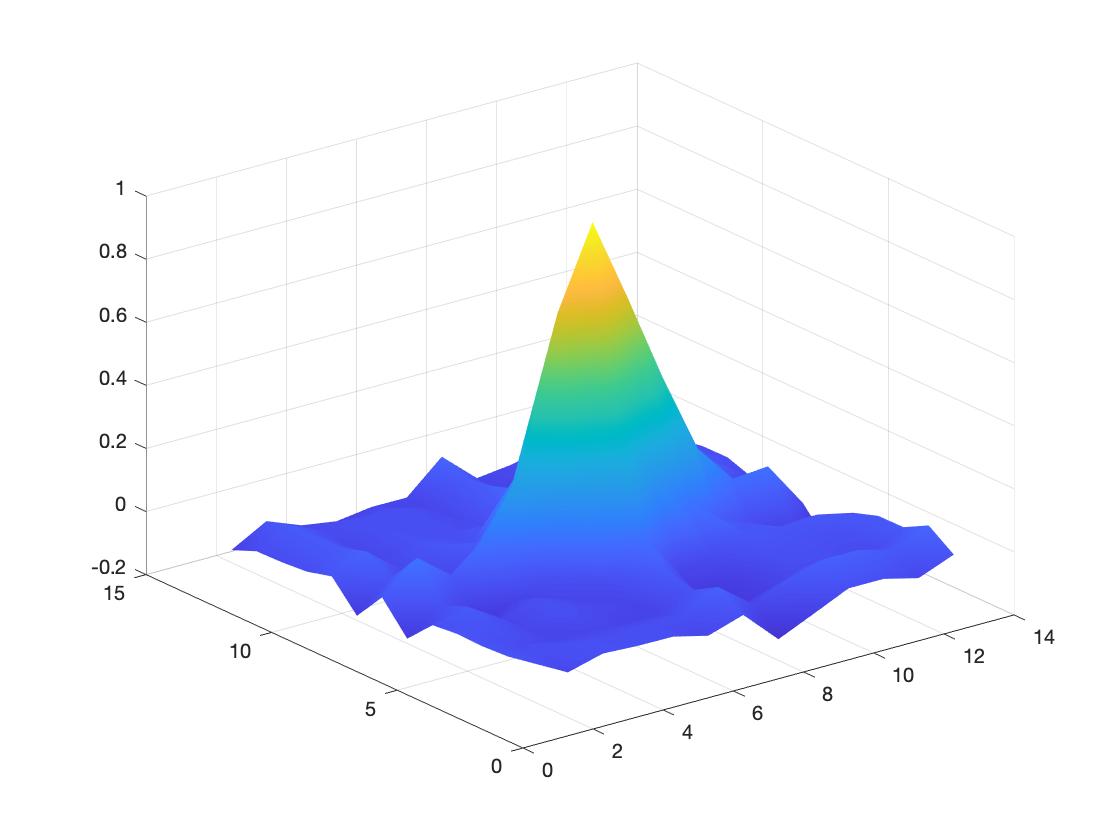}\label{fig:comparison_filterA_mnist}}
\subfloat[][\emph{Gaussian function}.]
{\includegraphics[height=5cm]{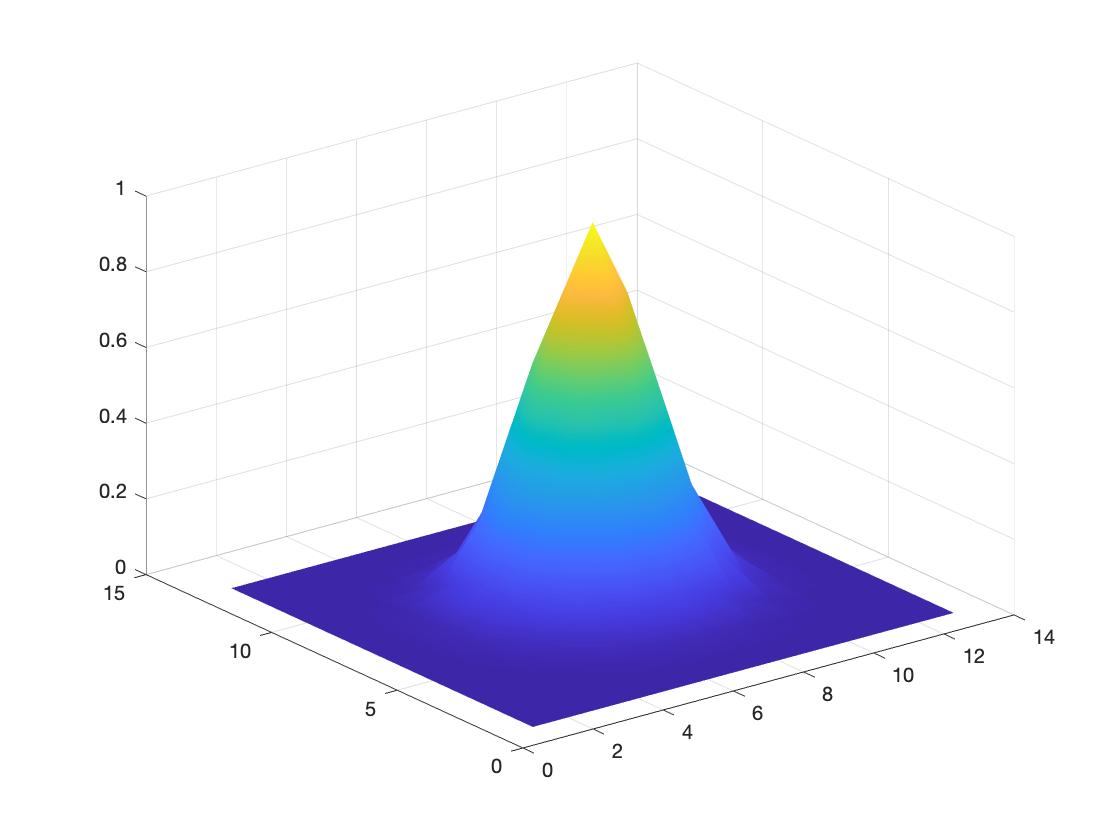}\label{fig:comparison_filterB_mnist}}
\caption{Comparison between the filter $\Psi^0$ and a Gaussian. We can see that $\Psi^0$ is really close to a discrete approximation of a Gaussian fitted to the data.}
\label{fig:comparison_filter_mnist}

\end{figure*}

To perform this test we have built a new dataset of images starting from the dataset MNIST (a set of digits images, see \cite{Lecun}) and the dataset Fashion-MNIST (a set of cloths images, see \cite{Xiao}). They are two similar datasets, composed by grayscale images of size $28 \times 28$.
The aim of our LGN-CNN is to classify the input as a digit or a cloth, indeed if the image belongs to MNIST or Fashion-MNIST dataset. The new training set has been built by taking the first half of MNIST dataset (30000 images) and the first half of Fashion-MNIST dataset (30000 images), for a total of 60000 images. We have followed the same steps for the test set (for a total of 10000 images) and we have randomly sorted the training and test sets. Then each image has been labeled by 0 if it is a digit and by 1 if it is a cloth.

We have built a really simple LGN-CNN architecture that contains only a first layer with a single filter $\Psi^0$ of size $13 \times 13$ followed by a ReLU and by a fully connected layer. We have trained this neural network for a total of 25 epochs obtaining an accuracy of 98.55 \% on the classification task. 
Figure \ref{fig:LGN_CNN_simple_prob_Psi0} shows $\Psi^0$ of this LGN-CNN architecture. We can observe that it has a rotational invariant shape as we expected from theorem \ref{th:rotationinv}. Then we have tried to approximate the filter $\Psi^0$ with a Gaussian with the following formula 
$  G(x,y) = \alpha e^{\frac{-x^2-y^2}{2 \sigma^2}}. $
Figure \ref{fig:comparison_filter_mnist} shows $\Psi^0$ and its approximation by a Gaussian. The rotation invariance of $\Psi^0$ is now enforced thanks to the approximation obtained with a rotational invariant function as the Gaussian.



\end{document}